\DeclareMathOperator*{\argmin}{arg\,min}
\newtheorem{theorem}{Theorem}
\begin{document}

\title{Gradient Deconfliction via Orthogonal Projections onto Subspaces For Multi-task Learning}



\author{Shijie Zhu}
\authornote{Both authors contributed equally to this research.}
\affiliation{%
  \institution{Alibaba Group}
  \city{Beijing}
  \country{China}}
\email{yangmu.zsj@alibaba-inc.com}

\author{Hui Zhao}
\authornotemark[1]
\affiliation{%
  \institution{Alibaba Group}
  \city{Beijing}
  \country{China}}
\email{shuqian.zh@alibaba-inc.com}

\author{Tianshu Wu}
\affiliation{%
  \institution{Alibaba Group}
  \city{Beijing}
  \country{China}}
\email{shuke.wts@alibaba-inc.com}

\author{Pengjie Wang}
\affiliation{%
  \institution{Alibaba Group}
  \city{Beijing}
  \country{China}}
\email{pengjie.wpj@alibaba-inc.com}

\author{Hongbo Deng}
\affiliation{%
  \institution{Alibaba Group}
  \city{Hangzhou}
  \country{China}}
\email{dhb167148@alibaba-inc.com}



\author{Jian Xu}
\email{xiyu.xj@alibaba-inc.com}
\author{Bo Zheng}
\email{bozheng@alibaba-inc.com}
\affiliation{%
  \institution{Alibaba Group}
  \city{Beijing}
  \country{China}}


\renewcommand{\shortauthors}{Shijie Zhu et al.}

\begin{abstract}
Although multi-task learning (MTL) has been a preferred approach and successfully applied in many real-world scenarios,
MTL models are not guaranteed to outperform single-task models on all tasks mainly due to the negative effects of conflicting gradients among the tasks. 
In this paper, we fully examine the influence of conflicting gradients and further emphasize the importance and advantages of achieving non-conflicting gradients which allows simple but effective trade-off strategies among the tasks with stable performance.
Based on our findings, we propose the Gradient Deconfliction via Orthogonal Projections onto Subspaces (GradOPS) spanned by other task-specific gradients. 
Our method not only solves all conflicts among the tasks, but can also effectively search for diverse solutions towards different trade-off preferences among the tasks.
Theoretical analysis on convergence is provided, and performance of our algorithm is fully testified on multiple benchmarks in various domains. 
Results demonstrate that our method can effectively find multiple state-of-the-art solutions with different trade-off strategies among the tasks on multiple datasets.
\end{abstract}

\begin{CCSXML}
<ccs2012>
   <concept>
       <concept_id>10010147.10010257.10010258.10010262</concept_id>
       <concept_desc>Computing methodologies~Multi-task learning</concept_desc>
       <concept_significance>500</concept_significance>
       </concept>
   <concept>
       <concept_id>10010147.10010257.10010293.10010294</concept_id>
       <concept_desc>Computing methodologies~Neural networks</concept_desc>
       <concept_significance>500</concept_significance>
       </concept>
 </ccs2012>
\end{CCSXML}

\ccsdesc[500]{Computing methodologies~Multi-task learning}
\ccsdesc[500]{Computing methodologies~Neural networks}

\keywords{deep learning; multi-task learning}


\maketitle

\section{Introduction}\label{sec1}
Multi-task Learning (MTL) aims at jointly training one model to master different tasks via shared representations and bottom structures to achieve better and more generalized results.
Such positive knowledge transfer is the prominent advantage of MTL and is the key to the successful applications of MTL in various domains, like computer visions \citep{misra2016cross,vision_ref2,zamir2018taskonomy,liu2019end}, natural language processing \citep{dong2015multi,mccann2018natural,GradVac,radford2019language}, and recommender systems \citep{MMoE,ESM,PLE,ESM2}.
Many works also make further improvements via task-relationship modelling \citep{misra2016cross,MMoE,MCBD,liu2019end,PLE,AITM} to fully exploit the benefit from shared structures.
However, while such design allows positive transfer among the tasks, it also introduces the major challenges in MTL, of which the most dominating one is the conflicting gradients problem.

Gradients of two tasks, $g_i$ and $g_j$, are considered conflicting if their dot product $g_i \cdot g_j < 0$. With conflicting gradients, improvements on some tasks may be achieved at the expense of undermining other tasks. 
The most representative works that seek to straightly solve conflicts among the tasks are PCGrad \citep{PCGrad} and GradVac \citep{GradVac}.
For each task $\mathcal{T}_i$, PCGrad iteratively projects its gradient $g_i$ onto gradient directions of other tasks and abandons the conflicting part; GradVac argues that each task pair $(\mathcal{T}_i,\mathcal{T}_j)$ should have unique gradient similarity, and chooses to modify $g_i$ towards such similarity goals.
However, these methods only discussed convergence guarantee and gradient deconfliction under two-task settings \citep{CAGrad}.
For MTLs with 3 or more tasks, the properties of these algorithms may not hold because the sequential gradient modifications are not guaranteed to produce non-conflicting gradients among any two task pairs. 
Specifically, as demonstrated in Figure \ref{figure1-b} and \ref{figure1-c}, the aggregated update direction $G'$ of these methods (e.g. PCGrad) still randomly conflicts with different original $g_i$ because of the randomly selected processing orders of task pairs, leading to decreasing performance on corresponding tasks.


\begin{figure*}[t]
\begin{center}
\subfigure[GD]{\includegraphics[width=0.235\textwidth]{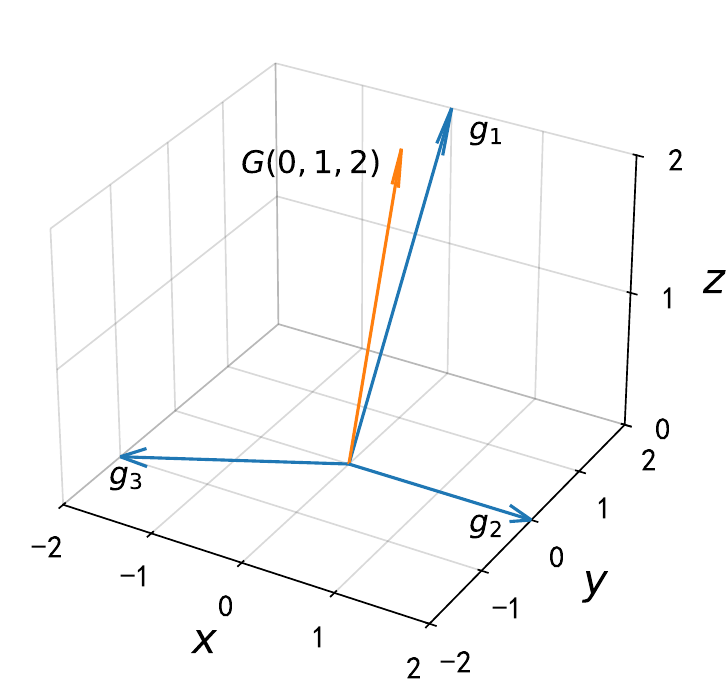} \label{figure1-a}}
\subfigure[PCGrad]{\includegraphics[width=0.235\textwidth]{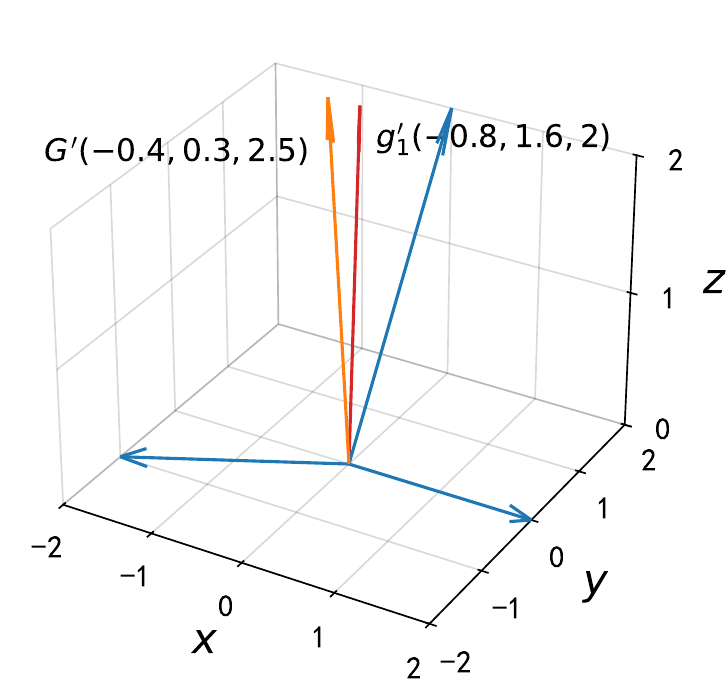} \label{figure1-b}}
\subfigure[PCGrad]{\includegraphics[width=0.235\textwidth]{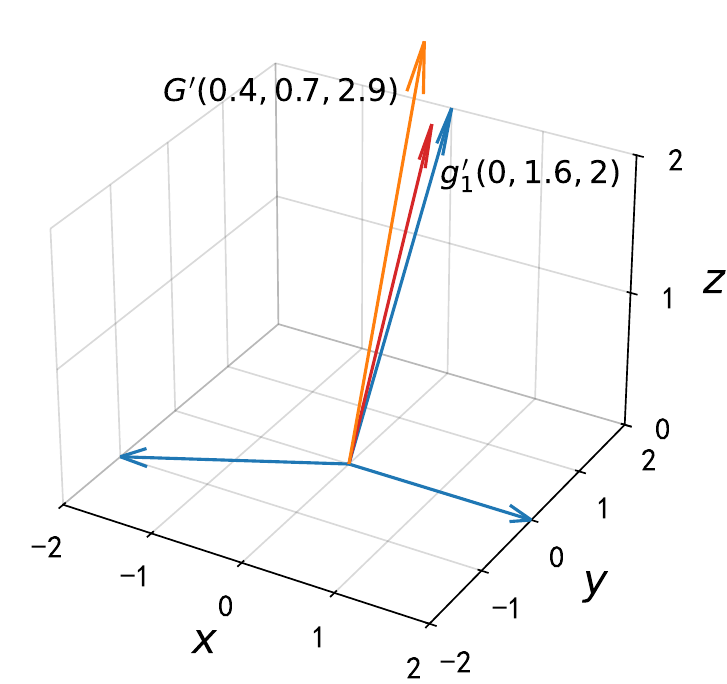} \label{figure1-c}}
\subfigure[GradOPS]{\includegraphics[width=0.235\textwidth]{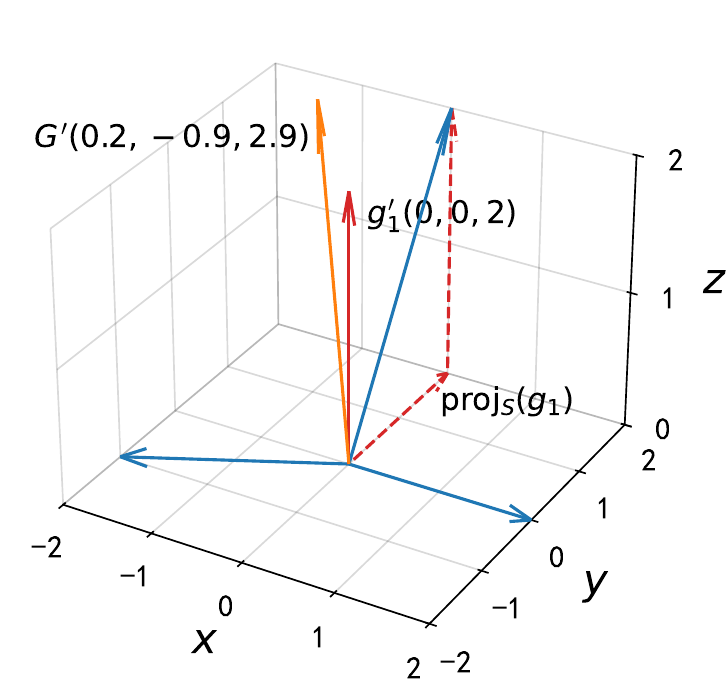} \label{figure1-d}}
\end{center}
\caption{Illustrative example of gradient conflicts in a three-task learning problem using gradient descent (GD), PCGrad and GradOPS.
Task-specific gradients are labeled $g_1$, $g_2$ and $g_3$.
The aggregated gradient $G$ or $G'$ in (a),(b) and (c) conflicts with the original gradient $g_3$, $g_2$ and $g_3$, respectively, resulting in decreasing performance of corresponding tasks.
Note that different processing orders in PCGrad ([1,2,3] for (b), [3,2,1] for (c)) lead to conflicts of $G$ with different original $g_i$.
In contrast, the GradOPS-modified $g_{1}'$ is orthogonal to $S={\rm span}\{g_2, g_3\}$ with the conflicting part on $S$ removed, 
similarly for $g_{2}'$ and $g_{3}'$ (omitted).
Thus, neither each $g_{i}'$ nor $G'$ conflicts with any of $\{g_i\}$.
}
\label{figure1}
\end{figure*}
Another stream of work \citep{MGDA18,PeMTL,ParetoMTL,CAGrad} avoids directly dealing with gradient conflicts but casting the MTL problems as multi-objective optimization (MOO) problems.
Though practically effective, applications of most MOO methods are greatly limited since these methods have to delicately design complex algorithms for certain trade-off among the tasks based on conflicting gradients.
However, solutions toward certain trade-off may not always produce expected performance, especially when the convex loss assumption fails and thus convergence to Pareto optimal fails.
In addition, different scenarios may require divergent preferences over the tasks, whereas most MOO methods can only search for one Pareto optimal toward certain trade-off because their algorithms are designed to be binded with certain trade-off strategy.
Therefore, it's difficult to provide flexible and different trade-offs given conflicting gradients.
Even for \citet{CAGrad},\citet{IMTL} and \citet{NashMTL} which claim to seek Pareto points with balanced trade-offs among the tasks, their solutions might not necessarily satisfy the MTL practitioners’ needs, since it's always better to provide the ability of reaching different Pareto optimals and leave the decision to users \citep{ParetoMTL}.

In this paper, we propose a simple yet effective MTL algorithm: Gradient Deconfliction via Orthogonal Projections onto Subspaces spanned by other task-specific gradients (GradOPS).
Compared with existing projection-based methods \citep{PCGrad,GradVac}, our method not only completely solves all conflicts among the tasks with stable performance invariant to the random processing orders during gradient modifications, but is also guaranteed to converge to Pareto stationary points regardless of the number of tasks to be optimized.
Moreover, with gradient conflicts among the tasks completely solved, trade-offs among the tasks are also much easier. 
With conflicting gradients, most MOO methods have
to conceive delicate algorithms to achieve certain trade-off preferences. In comparison, our method can effectively search for diverse solutions toward different trade-off preferences simply via different non-negative linear combinations of the deconflicted gradients, which is controlled by a single hyperparameter (see Figure ~\ref{figure2}).

The main contributions of this paper can be summarized as follows:
\begin{itemize}
    \item Focused on the conflicting gradients challenge, we propose a orthogonal projection based gradient modification method that not only completely solves all conflicts among the tasks with stable and invariant results regardless of the number of tasks and their processing orders, the aggregated final update direction is also non-conflicting with all the tasks.
    \item With non-conflicting gradients obtained, a simple reweighting strategy is designed to offer the ability of searching Pareto stationary points toward different trade-offs. We also empirically testified that, with gradient-conflicts completely solved, such a simple strategy is already effective and flexible enough to achieve similar trade-offs with some MOO methods and even outperform them.
    \item Theoretical analysis on convergence is provided and comprehensive experiments are presented to demonstrate 
    that our algorithm can effectively find multiple state-of-the-art solutions with different trade-offs among the tasks on MTL benchmarks in various domains.
\end{itemize}

\begin{figure*}[t]
\begin{center}
\includegraphics[width=1\textwidth]{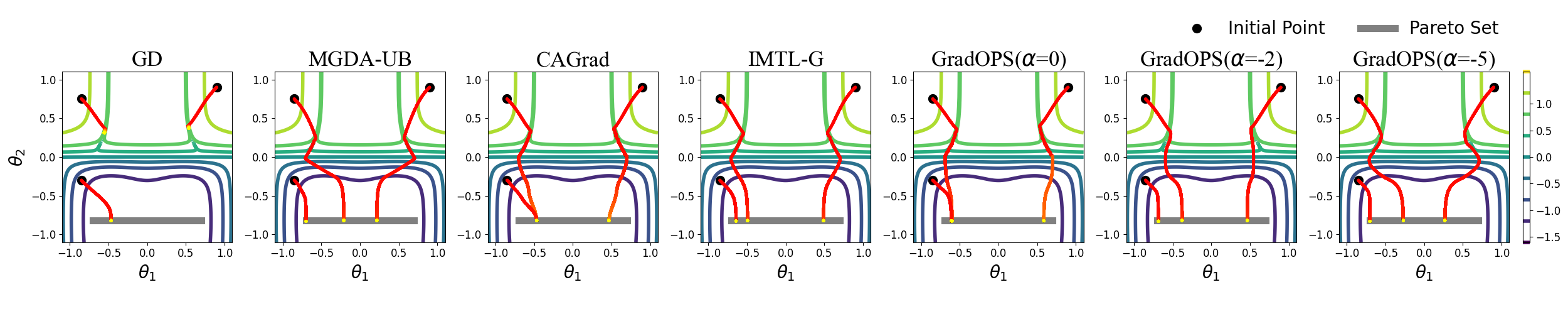} 
\end{center}
\caption{Visualization of trade-offs in a 2D multi-task optimization problem.
Shown are trajectories of each method with 3 different initial points (labeled with black $\bullet$) using Adam optimizer \citep{kingma2014adam}.
Gradient descent (GD) is unable to traverse the deep valley on two of the initial points because there are conflicting gradients and the gradient magnitude of one task is much larger than the other.
For MGDA-UB \citep{MGDA18}, CAGrad \citep{CAGrad}, and IMTL-G \citep{IMTL}, the final convergence point is fixed for each initial point.
In contrast, GradOPS could converge to multiple points in the Pareto set by setting different $\alpha$.
Experimental details are provided in Appendix \ref{implement_detail}.
}
\label{figure2}
\end{figure*}

\section{Preliminaries} \label{sec2}

In this section, we first clarify the formal definition of MTL.
Then we discuss the major problems in MTL and state the importance of achieving non-conflicting gradients among the tasks.

\paragraph{Problem Definition.}
For a multi-task learning (MTL) problem with $T > 1$ tasks $\{\mathcal{T}_1, ..., \mathcal{T}_T\}$, each task is associated with a loss function $\mathcal{L}_i(\theta)$ for a shared set of parameters $\theta$.
Normally, a standard objective for MTL is to minimize the summed loss over all tasks: $\theta^*=\argmin_{\theta} \sum_i \mathcal{L}_i(\theta)$.


\paragraph{Pareto Stationary Points and Optimals.}
A solution $\theta$ dominates another $\theta'$ if $\mathcal{L}_i(\theta) \leq \mathcal{L}_i(\theta')$ for all $\mathcal{T}_i$ and $\mathcal{L}_i(\theta) < \mathcal{L}_i(\theta')$ holds for at least one $\mathcal{T}_i$.
A solution $\theta^*$ is called Pareto optimal if no solution dominates $\theta^*$.
A solution $\theta$ is called Pareto stationary if there exists $\boldsymbol{w} \in \mathbb{R}^T$ such that $w_i \geq 0$, $\sum_{i=1}^T w_i=1$ and $\sum_{i=1}^T w_i g_i(\theta) = \textbf{0}$, where $g_i = \nabla_\theta \mathcal{L}_i(\theta)$ denotes the gradient of $\mathcal{T}_i$.
All Pareto optimals are Pareto stationary, the reverse holds when $\mathcal{L}_i$ is convex for all $\mathcal{T}_i$ \citep{Converge2,Converge1}. Note that MTL methods can only ensure reaching Pareto stationary points, convergence to Pareto optimals may not holds without the convex loss assumption.

\paragraph{Conflicting Gradients.}
Two gradients $g_1$ and $g_2$ are conflicting if the dot product $g_1 \cdot g_2 < 0$.
Let $G$ be the aggregated update gradient, i.e. $G=\sum_{i=1}^T g_i$, we define two types of conflicting gradients:
\begin{itemize}
    \item conflicts among the tasks if there exists any two task $\mathcal{T}_i$ and $\mathcal{T}_j$ with $g_i\cdot g_j < 0$,
    \item conflicts of the final update direction with the tasks if any task $\mathcal{T}_i$ satisfies that $G\cdot g_i < 0$.
\end{itemize}
For clarity, we introduce the concept of \textbf{strong non-conflicting} if $g_i\cdot g_j \geq 0, \forall i,j$ holds, and \textbf{weak non-conflicting} if the aggregated gradient $G$ dose not conflict with all original task gradients $g_i$.
Note that strong non-conflicting gradients are always guaranteed to be weak non-conflicting, whereas weak non-conflicting gradients are not necessarily strong non-conflicting.
Most existing MOO methods focus on seeking weak non-conflicting $G$ toward certain trade-offs, which is directly responsible for task performance.
Current projection based methods \citep{PCGrad,GradVac} can only ensure strong non-conflicting gradients with $T=2$, and when $T>2$ even weak non-confliction is not guaranteed.

\paragraph{Advantages of Strong Non-Conflicting Gradients.}
Without strong non-conflicting gradients, MOO methods 
are limited to find only one solution toward certain trade-off by delicately balancing the conflicting gradients, which empirically may lead to unsatisfying results if such trade-off is less effective when convex assumption of $\mathcal{L}_i$ fails.
In contrast, with strong non-conflicting gradients, we instantly acquire the advantage that it will be much easier to apply different trade-offs on $G$ simply via non-negative linear combinations of the deconflicted gradients, all guaranteed to be non-conflicting with each original $g_i$. With such ability, stable results with various trade-offs are also ensured since all tasks are always updated toward directions with non-decreasing performance.


Thus, our GradOPS aims to solve MTL problems with the following goals:
(1) to obtain stable and better performance on all the tasks by achieving strong non-conflicting gradients,
(2) to provide simple yet effective strategies capable of performing different trade-offs.

\section{Method}

\begin{figure*}[t]
\begin{center}
\subfigure[GD]{\includegraphics[width=0.185\textwidth]{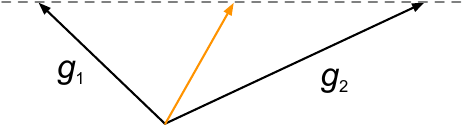} \label{figure3-a}}
\subfigure[MGDA-UB]{\includegraphics[width=0.185\textwidth]{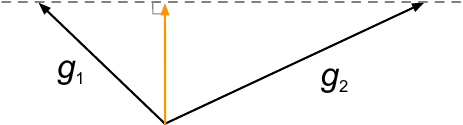} \label{figure3-b}}
\subfigure[IMTL-G]{\includegraphics[width=0.185\textwidth]{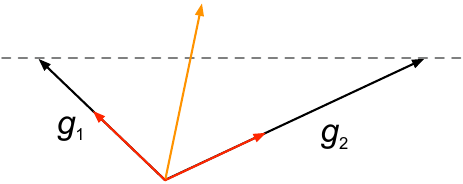} \label{figure3-c}}
\subfigure[PCGrad]{\includegraphics[width=0.185\textwidth]{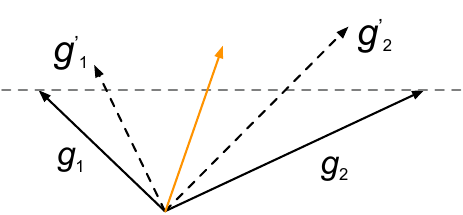} \label{figure3-d}}
\subfigure[GradOPS]{\includegraphics[width=0.185\textwidth]{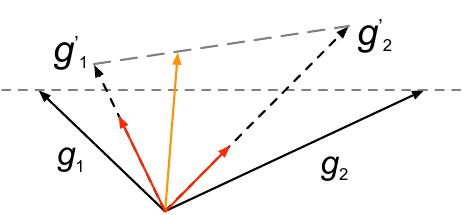} \label{figure3-e}}
\end{center}
\caption{Visualization of the update direction (in yellow) obtained by various methods on a two-task learning problem.
We rescaled the update vector to half for better visibility.
$g_1$ and $g_2$ represent the two task-specific gradients.
MGDA-UB proposes to minimize the minimum possible convex
combination of task gradients, and the update vector is perpendicular to the dashed line.
IMTL-G proposes to make the projections of the update vector onto \{$g_1$, $g_2$\} to be equal.
PCGrad and GradOPS project each gradient onto the normal plane of the other to obtain $g'_1$ and $g'_2$.
For PCGrad, the final update vector is the average of \{$g'_1$, $g'_2$\}.
GradOPS further reweights \{$g'_1$, $g'_2$\} to make trade-offs between two tasks.
As a result, the final update direction of GradOPS is flexible between $g'_1$ and $g'_2$, covering the directions of MGDA-UB and IMTL-G instead of been fixed as other methods, and always doesn't conflict with each task-specific gradient.
}
\label{figure3}
\end{figure*}


\subsection{Strong Non-conflicting Gradients} \label{sec3.1}
Though recent works \citep{PCGrad, GradVac} 
have already sought to directly deconflict each task specific gradient $g_i$ with all other tasks iteratively, none of theses methods manage to solve all conflicts among the tasks as stated in Section \ref{sec1}, let alone to further examine the benefits of strong non-conflicting gradients.
As shown in Figure \ref{figure1-b} and \ref{figure1-c}, both the modified $g'_i$ and the aggregated gradient $G'$ of existing methods may still conflict with some of the original task gradients $\{g_i\}$.

To address the limits of existing methods and to ensure strong non-conflicting gradients, GradOPS deconflicts gradients by projecting each $g_i$ onto the subspace orthogonal to the span of the other task gradients.
Formally, GradOPS proceeds as follows:
(i) For each $g_i$ in any permutation of the original task gradients, GradOPS first identifies whether $g_i$ conflicts with any of the other tasks by checking the signs of $g_i\cdot g_j$ for all $\{g_j\}_{j \neq i}$.
(ii) If no conflict exists, $g_i$ remains unmodified: $g'_i=g_i$.
Otherwise, GradOPS projects $g_i$ onto the subspace orthogonal to $S = {\rm span}\{g_j\}_{j \neq i}$.
The orthogonal basis for $S$ as $\{u_j\}$ is computed using the Gram-Schmidt procedure:
\begin{equation}
\begin{gathered}
  u_1 = g_1, \text{ if $i \neq 1$ else } u_2 = g_2, \\
  u_j = g_j - \sum_{k<j, k \neq i}{{\rm proj}_{u_k}(g_j)}, j > 1, j \neq i,
\label{eq1}
\end{gathered}
\end{equation}
where ${\rm proj}_u(v) = \frac{u \cdot v}{{\Vert u \Vert}^2}u$.
Given the orthogonal basis $\{u_j\}$, the modified $g'_i$ is orthogonal to $S$:
\begin{align}
g'_i = g_i - \sum_{j \neq i}{{\rm proj}_{u_j}(g_i)}.
\label{eq2}
\end{align}
Eventually, we have $g'_i \cdot g_j \geq 0, \forall i,j$ and thus each $g'_i$ does not conflict with every original $g_j$.
(iii) With strong non-conflicting gradients $\{g'_i\}$, the vanilla aggregated update direction $G' = \sum_i{g'_i}$ is guaranteed to be non-conflicting with any of the original $g_j$ since $G' \cdot g_j = (\sum_i{g'_i}) \cdot g_j = \sum_i(g'_i \cdot g_j) \geq 0, \forall j$, the same holds for any non-negative linear combinations of the $\{g'_i\}$.
It is worth mentioning that when $g'_i = \textbf{0}$ stands $\forall i$, which rarely occurs in practice, we can degenerate to use existing MOO methods such as multiple gradient descent algorithm \citep{MGDA12} to obtain non-conflicting $G'$.

With this procedure, GradOPS ensures that both each modified task gradient $g'_i$ and the final update gradient $G'$ do not conflict with each original task gradient $g_j$, thus completely solves the conflicting gradients problem. In addition, unlike \citep{PCGrad,GradVac}, our result is stable and invariant to the permutations of tasks during the procedure.
Under mild assumptions of neural network and small update step size, the GradOPS procedure is guaranteed to converge to a Pareto stationary point. Theoretical analysis on convergence of GradOPS is provided as \textbf{Theorem} \ref{th1} and proved in Appendix \ref{Theorem1}.

\begin{theorem} \label{th1}
Assume individual loss functions $\mathcal{L}_1, \mathcal{L}_2, ..., \mathcal{L}_T$ are differentiable.
Suppose the gradient of $\mathcal{L}$ is L-Lipschitz with $L>0$.
Then with the update step size $t < \frac{2}{TL}$, GradOPS in Section \ref{sec3.1} will converge to a Pareto stationary point.
\end{theorem}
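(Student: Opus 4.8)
The plan is to treat GradOPS as an inexact gradient method for the summed objective $\mathcal{L}=\sum_{j}\mathcal{L}_j$ and run the classical $L$-smoothness descent argument, the one extra ingredient being the structural property of the modified gradients established in Section~\ref{sec3.1}: $g'_i\cdot g_j\ge 0$ for every pair $i,j$. This also covers $i=j$, since in the conflict branch $g_i$ splits orthogonally as $g'_i$ plus its projection onto $S=\mathrm{span}\{g_j\}_{j\ne i}$, giving $g'_i\cdot g_i=\Vert g'_i\Vert^2$, while in the no-conflict branch $g'_i=g_i$.

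First I would apply the descent lemma to the update $\theta^{+}=\theta-t\,G'$ with $G'=\sum_i g'_i$:
\[
\mathcal{L}(\theta^{+})\;\le\;\mathcal{L}(\theta)\;-\;t\,\nabla\mathcal{L}(\theta)\cdot G'\;+\;\frac{Lt^{2}}{2}\,\Vert G'\Vert^{2}.
\]
Then I would lower-bound the linear term and upper-bound the quadratic term with the non-conflicting property. Since $\nabla\mathcal{L}(\theta)=\sum_j g_j$, all cross terms are nonnegative and $\nabla\mathcal{L}(\theta)\cdot G'=\sum_{i,j}g_j\cdot g'_i\ge\sum_i g_i\cdot g'_i=\sum_i\Vert g'_i\Vert^2$; by Cauchy--Schwarz $\Vert G'\Vert^{2}=\bigl\Vert\sum_i g'_i\bigr\Vert^{2}\le T\sum_i\Vert g'_i\Vert^{2}$. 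Substituting yields
\[
\mathcal{L}(\theta^{+})\;\le\;\mathcal{L}(\theta)\;-\;t\Bigl(1-\tfrac{TLt}{2}\Bigr)\sum_i\Vert g'_i\Vert^{2},
\]
and the hypothesis $t<\tfrac{2}{TL}$ makes the bracket strictly positive, so $\mathcal{L}$ is non-increasing along the iterates and strictly decreasing whenever $\sum_i\Vert g'_i\Vert^2>0$.

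Next I would telescope this over $\theta^{(0)},\theta^{(1)},\dots$: assuming $\mathcal{L}$ is bounded below (the losses are nonnegative in all settings we consider), the series $\sum_{k}\sum_i\Vert g'^{(k)}_i\Vert^{2}$ converges, hence $\sum_i\Vert g'^{(k)}_i\Vert^2\to 0$ and $\Vert g'^{(k)}_i\Vert\to 0$ for every $i$. It then remains to read off Pareto stationarity at a limit point $\bar\theta$, where $g'_i(\bar\theta)=\mathbf{0}$ for all $i$, i.e.\ for each $i$ either $g_i=\mathbf{0}$ or $g_i\in\mathrm{span}\{g_j\}_{j\ne i}$. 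If some $g_i=\mathbf{0}$ the point is Pareto stationary with $w_i=1$; otherwise the task gradients are linearly dependent, which for the generic high-dimensional network gradients we consider ($\dim\theta\gg T$) forces all $g_i=\mathbf{0}$ as well, and the measure-zero residual case is exactly the one handled by the MGDA fall-back noted in Section~\ref{sec3.1}, whose output is Pareto stationary by construction.

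The step I expect to be the main obstacle is this last one: passing from ``$g'_i\to\mathbf{0}$ for every $i$'' to a genuine certificate $\sum_i w_i g_i=\mathbf{0}$ with $w_i\ge 0$, $\sum_i w_i=1$. A vanishing orthogonal projection, unlike the minimum-norm convex-hull element used by MGDA-type methods, does not by itself supply nonnegative multipliers, so this needs the case analysis above (and, implicitly, the mild nondegeneracy of network gradients built into the theorem's hypotheses). Everything before it — the descent inequality and its telescoping — is routine once the $g'_i\cdot g_j\ge 0$ property is imported from Section~\ref{sec3.1}.
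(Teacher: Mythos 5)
Your proposal follows essentially the same route as the paper's proof: the $L$-smoothness descent inequality applied to $\theta - tG'$, the lower bound $\nabla\mathcal{L}\cdot G' \ge \sum_i \Vert g'_i \Vert^2$ obtained from $g'_i\cdot g_j \ge 0$ together with $g_i\cdot g'_i = \Vert g'_i \Vert^2$, the bound $\Vert G'\Vert^2 \le T\sum_i \Vert g'_i\Vert^2$, the identical step-size threshold $t < \frac{2}{TL}$, and the same endgame at a point with $g'_i=\mathbf{0}$ for all $i$ (Pareto stationarity via a convex-combination certificate when it exists, MGDA fall-back otherwise). The one deviation, your genericity aside claiming that linear dependence of the $g_i$ forces them all to vanish, is unsound at a limit point selected by the dynamics (such points are exactly where dependence occurs), but it is not load-bearing since, just as the paper does, you ultimately resolve the nonnegative-multiplier issue by the case split plus the MGDA fall-back.
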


\begin{algorithm}[h]
        \caption{GradOPS Update Rule}
	\textbf{Require}: task number $T$, a constant $\alpha$, initial model parameters $\theta$
	\begin{algorithmic}[1]
	    \WHILE{not converged}
    	    \STATE $g_i \leftarrow \nabla_\theta \mathcal{L}_i(\theta), \forall i$
    		\FOR{$i = 1; i \leq T$}
    		    \IF{$\exists k \neq i, g_i \cdot g_k < 0$}
                        \STATE compute orthogonal basis $\{u_j\}$ according to Eq. \ref{eq1}
    		        \STATE $g'_i = g_i - \sum_{j \neq i}{{\rm proj}_{u_j}(g_i)}$
    		    \ELSE
    		        \STATE $g'_i = g_i$
    		    \ENDIF
    		\ENDFOR
                \IF{$\exists i, g'_i \neq \textbf{0}$}
                    \STATE $G' = \sum_i g'_i$
                    \STATE compute scaling factors $\{w_i\}$ according to Eq. \ref{eq5}
        		\STATE update $\theta$ with gradient $G'_{\rm new} = \sum_i{w_ig'_i}$
    		\ELSE
                    \STATE update $\theta$ with gradient obtained from MGDA \citep{MGDA12}
                \ENDIF
    	\ENDWHILE
	\end{algorithmic}
\label{alg1}
\end{algorithm}

\subsection{Trade-offs among Tasks} \label{sec3.2}

As shown in Figure \ref{figure2}, most existing methods could only converge to one point in the Pareto set, without the ability of flexibly performing trade-offs among the tasks.
Figure \ref{figure3-a} to \ref{figure3-d} give a more detailed illustration about the fixed update strategies of different methods, different solutions are achieved only through different initial points.
However, note that MTL practitioners may need varying trade-offs among the tasks by demand in real-world applications \citep{ParetoMTL, PHN}.
In order to converge to points with different trade-offs, the most straightforward idea is to assign tasks different loss weights, i.e. to scale task gradients.
However, this is often unsatisfying 
for two reasons:
(1) update strategies of some methods like IMTL-G \citep{IMTL} are invariant to gradients scales,
and (2) it is difficult to assign appropriate weights without prior knowledge.
Therefore, we further propose to dynamically scale the modified task gradients with strong non-conflicting guarantees obtained by GradOPS to adjust the update direction.

To this end, we first define $R_i$ as the scalar projection of $G' = \sum_i{g'_i}$ onto the original $g_i$:
\begin{equation}
R_i = ||G'|| \times \cos(\phi_{\langle g_i, G'\rangle}) = \frac{G' \cdot g_i}{||g_i||}.
\label{eq3}
\end{equation}
Note that $R_i \geq 0$ as GradOPS always ensures $g_i\cdot G' \geq 0$.
Given the same $G'$, each $R_i$ can be regarded as a measure of the angle between $g_i$ and $G'$, or a measure of the real update magnitude on direction of $g_i$, indicating the dominance and relative update speed of task $\mathcal{T}_i$ among the tasks.
A new update direction $G'_{\rm new} = \sum_i{w_ig'_i}$ is thus obtained by summing over the reweighted $g'_i$ with:
\begin{gather}
r_i=\frac{R_i}{\sum_i{R_i} / T}, \label{eq4} \\
w_i = \frac{r_i^\alpha}{\sum_i{r_i^\alpha} / T}, \label{eq5}
\end{gather}
where $w_i$ functions as the scale factor calculated from $\{r_i\}$ and determines trade-offs among the tasks, $r_i$ reveals the relative update magnitude on direction $g_i$ and always splits the tasks into dominating ones $\{\mathcal{T}_i|r_i>1\}$ and dominated ones with $r_i\leq 1$.
Thus, a single hyperparameter $\alpha\in\mathbb{R}$ on $r_i^\alpha$ will allow effective trade-offs among the dominating and dominated tasks via non-negative $w_i$ in Eq. \ref{eq5}, which still ensures that such improved GradOPS procedure will converge to Pareto stationary points. Convergence of GradOPS with different $\boldsymbol{w}$ is provided in \textbf{Theorem} \ref{th2} and proved in Appendix \ref{Theorem2}. 

\begin{theorem} \label{th2}
Assume individual loss functions $\mathcal{L}_1, \mathcal{L}_2, ..., \mathcal{L}_T$ are differentiable.
Suppose the gradient of $\mathcal{L}$ is L-Lipschitz with $L>0$.
Then with the update step size $t < {\rm min}_{i,j}\frac{2 w_i \Vert g'_i \Vert^2}{TL w^2_j \Vert g'_j \Vert^2}$, GradOPS in Section \ref{sec3.2} will converge to a Pareto stationary point.
\end{theorem}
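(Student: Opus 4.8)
The plan is to mimic the standard descent-lemma argument used for Theorem 1, but carried out for the reweighted update direction $G'_{\rm new}=\sum_i w_i g'_i$ instead of the plain sum $G'=\sum_i g'_i$. First I would invoke $L$-Lipschitzness of $\nabla\mathcal{L}$ (equivalently, of $G=\sum_i g_i$, or a per-task version) to get the quadratic upper bound
\begin{equation}
\mathcal{L}_i(\theta - t\,G'_{\rm new}) \le \mathcal{L}_i(\theta) - t\, g_i \cdot G'_{\rm new} + \tfrac{L}{2} t^2 \Vert G'_{\rm new}\Vert^2
\end{equation}
for each task $\mathcal{T}_i$. The first-order term is $g_i\cdot G'_{\rm new}=\sum_j w_j\,(g_i\cdot g'_j)\ge w_i\,(g_i\cdot g'_i)=w_i\Vert g'_i\Vert^2$, where the inequality uses the strong non-conflicting property $g_i\cdot g'_j\ge 0$ guaranteed by GradOPS and the identity $g_i\cdot g'_i=\Vert g'_i\Vert^2$ (since $g'_i$ is the component of $g_i$ orthogonal to the span of the other gradients). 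For the second-order term I would bound $\Vert G'_{\rm new}\Vert^2 = \Vert\sum_j w_j g'_j\Vert^2$; the cleanest route is Cauchy--Schwarz / power-mean, $\Vert\sum_j w_j g'_j\Vert^2 \le T\sum_j w_j^2\Vert g'_j\Vert^2$. Combining, each task's loss decreases provided $t\, w_i\Vert g'_i\Vert^2 > \tfrac{L}{2} t^2\, T\sum_j w_j^2\Vert g'_j\Vert^2$, i.e. $t < \frac{2 w_i\Vert g'_i\Vert^2}{TL\sum_j w_j^2\Vert g'_j\Vert^2}$; taking the worst case over $j$ in the denominator (replacing the sum by $T$ times its max term, or simply demanding the per-$j$ bound) yields the stated step size $t<\min_{i,j}\frac{2 w_i\Vert g'_i\Vert^2}{TL\, w_j^2\Vert g'_j\Vert^2}$.

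With that step size, every $\mathcal{L}_i$ is non-increasing along the iterates, and the per-step decrease is bounded below by a positive multiple of $\sum_i w_i\Vert g'_i\Vert^2$ (after absorbing constants), so summing a telescoping series over iterations and using that the losses are bounded below forces $w_i\Vert g'_i\Vert^2\to 0$, hence $g'_i\to\mathbf 0$ for all $i$ (the $w_i$ are bounded away from $0$ since each $r_i>0$ and the normalization keeps them positive; the degenerate all-zero case is handled by the MGDA branch). The final step is to convert "$g'_i=\mathbf 0$ for all $i$" into Pareto stationarity: if every $g_i$ lies in the span of the other task gradients, then the gradients $\{g_i\}$ are linearly dependent with a dependence that can be normalized to non-negative coefficients summing to one with $\sum_i w_i g_i=\mathbf 0$ — this is exactly the definition of a Pareto stationary point. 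I would lift this argument verbatim from the proof of Theorem 1, since the characterization "all $g'_i=\mathbf 0$ $\Rightarrow$ Pareto stationary" is independent of the weights $w_i$ used in the aggregation.

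The main obstacle is handling the second-order term cleanly enough to land exactly on the step-size bound in the statement: the bound $\Vert\sum_j w_j g'_j\Vert^2\le T\sum_j w_j^2\Vert g'_j\Vert^2$ is loose, and matching the stated $\min_{i,j}$ form requires being slightly careful — one must bound $\sum_j w_j^2\Vert g'_j\Vert^2$ by $T\max_j w_j^2\Vert g'_j\Vert^2$ and then track the max through the per-task inequality, rather than keeping the sum. A secondary subtlety is confirming that the $w_i$ stay uniformly positive (so that $w_i\Vert g'_i\Vert^2\to 0$ genuinely implies $g'_i\to\mathbf 0$): this follows because $r_i=R_i/(\bar R)$ with $R_i=G'\cdot g_i/\Vert g_i\Vert\ge 0$, and when $G'\ne\mathbf 0$ at least one $R_i>0$, but one should note the edge case where some $R_i=0$ makes $w_i=0$ — there $g_i$ is orthogonal to $G'$, which combined with $g_i\cdot g'_j\ge0$ forces $g_i\cdot g'_i=\Vert g'_i\Vert^2=0$ anyway, so that task already contributes nothing and the argument goes through. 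Everything else is the routine telescoping/descent bookkeeping and can be stated briefly, referring back to Appendix \ref{Theorem1}.
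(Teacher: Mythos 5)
Your skeleton matches the paper's in most respects (descent lemma, lower-bounding the first-order term by $w_i\Vert g'_i\Vert^2$ via strong non-confliction and $g_i\cdot g'_i=\Vert g'_i\Vert^2$, the bound $\Vert\sum_j w_j g'_j\Vert^2\le T\sum_j w_j^2\Vert g'_j\Vert^2$, and reusing the Theorem~1 argument that $g'_i=\mathbf 0$ for all $i$ gives Pareto stationarity). The genuine gap is exactly the point you flag as the "main obstacle" and then do not resolve: working per task, your sufficient condition for $\mathcal{L}_i$ to decrease is $t<\frac{2w_i\Vert g'_i\Vert^2}{TL\sum_j w_j^2\Vert g'_j\Vert^2}$, and neither of your proposed repairs reaches the stated bound. "Demanding the per-$j$ bound" (i.e.\ $t<\frac{2w_i\Vert g'_i\Vert^2}{TLw_j^2\Vert g'_j\Vert^2}$ for every $j$) only gives $\frac{TL}{2}t\,w_j^2\Vert g'_j\Vert^2<w_i\Vert g'_i\Vert^2$ term by term, and summing over $j$ yields $\frac{TL}{2}t\sum_j w_j^2\Vert g'_j\Vert^2<T\,w_i\Vert g'_i\Vert^2$ --- off by a factor of $T$; replacing the sum by $T\max_j w_j^2\Vert g'_j\Vert^2$ makes the denominator larger, hence forces a step size a factor of $T$ \emph{smaller} than the theorem states. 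So with your estimates the stated $t<\min_{i,j}\frac{2w_i\Vert g'_i\Vert^2}{TLw_j^2\Vert g'_j\Vert^2}$ does not guarantee per-task decrease, and your subsequent claim that "every $\mathcal{L}_i$ is non-increasing along the iterates" is unsupported. A secondary issue: the theorem assumes only that $\nabla\mathcal{L}$ of the \emph{summed} loss is $L$-Lipschitz, so applying the descent lemma to each $\mathcal{L}_i$ separately needs an assumption you do not have.

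The fix is to do what the paper does: apply the descent lemma once to the total loss $\mathcal{L}=\sum_i\mathcal{L}_i$ with $\theta'=\theta-tG'_{\rm new}$, getting $\mathcal{L}(\theta')\le\mathcal{L}(\theta)-t\bigl(\sum_i w_i\Vert g'_i\Vert^2-\frac{1}{2}TLt\sum_i w_i^2\Vert g'_i\Vert^2\bigr)$. Then the factor of $T$ is absorbed on the other side: summing the per-pair inequalities over $j$ gives $\frac{TL}{2}t\sum_j w_j^2\Vert g'_j\Vert^2<T\min_i w_i\Vert g'_i\Vert^2\le\sum_i w_i\Vert g'_i\Vert^2$, so the stated step size makes the total loss strictly decrease unless $g'_i=\mathbf 0$ for all $i$ (the paper restricts the min over $i$ to tasks with $\Vert g'_i\Vert>0$, where $w_i>0$), after which your telescoping and the Theorem~1 endgame go through. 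Your observation handling the $R_i=0$ edge case (orthogonality of $g_i$ to $G'$ forces $\Vert g'_i\Vert=0$) is correct and slightly more explicit than the paper, but it does not rescue the per-task monotonicity claim at the stated step size.
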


The complete GradOPS algorithm is summarized in Algorithm \ref{alg1}.
Note that PCGrad is a special case of our method when $T = 2$ and $\alpha=0$.

\paragraph{Discussion about $\alpha$.}
With $\alpha > 0$, performance of the dominating tasks are expected to be improved since higher value of $\alpha$ will enforce $G'_{\rm new}$ toward $\{g'_i\}$ of these tasks and yield greater $\{w_i\}$. 
While $\alpha < 0$, the dominated tasks are emphasized since lower value of $\alpha$ will ensure greater $\{w_i\}$ for tasks with smaller $\{r_i\}$. 
Specifically, a proper $\alpha < 0$ will pay more attention to the dominated tasks, and thus obtain more balanced update gradient, which is similar to IMTL-G.
Keep decreasing $\alpha$ will further focus on tasks with smaller gradient magnitudes, which coincides with the idea of MGDA-UB.
An example is provided in Figure \ref{figure2}, where the top right points in GradOPS($\alpha$=-2) and GradOPS($\alpha$=-5) converge to the similar points to IMTL-G and MGDA-UB, respectively.
Note that $G'_{\rm new} = G'$ when $\alpha=0$.
A pictorial description of this idea is shown in Figure \ref{figure3-e}:  
Different $\alpha$ will redirect $G'_{\rm new}$ between $g'_1$ and $g'_2$.

\subsection{Advantages of GradOPS} \label{sec3.3}
Now we discuss the advantages of GradOPS.
Both GradOPS and existing MOO methods can converge to Pareto stationary points, which are also Pareto optimals under convex loss assumptions. Thus ideally, there exists no dominating methods but only methods whose trade-off strategy best suited for certain MTL applications. 
Realizing that the assumptions may fail in practice and convergence to Pareto optimals may not hold, we are interested in achieving robust performance via simplest strategies instead of designing delicate trade-off strategies, which may fail to reach expected performance 
with violations of the assumptions.

Our Goal of securing robust performance is achieved via ensuring strong non-conflicting gradients.
Though \citep{PCGrad, GradVac} are already aware of the importance of solving gradient conflicts (see Section \ref{sec1}), none of them manage to guarantee strong non-conflicting gradients, which is achieved by GradOPS. In addition, we further conclude empirically the benefit of achieving strong non-conflicting gradients, which is all fully exploited in GradOPS:

With all conflicts solved, effective trade-offs among the tasks are much easier. With conflicting gradients, existing methods have to conceive very delicate algorithms to achieve certain trade-off preferences. The trade-off strategy in \citet{CAGrad} requires solving the dual problem of maximizing the minimum local improvements, IMTL-G \citep{IMTL} applies complex matrix multiplications only to find $G$ with equal projection onto each $g_i$. 
Unlike these methods, GradOPS can provide solutions with different trade-offs simply via non-negative linear combinations of the deconflicted gradients, and achieves similar trade-offs with different MOO methods, like MGDA-UB and IMTL-G, or even outperforms them.
Moreover, GradOPS, which only requires tuning a single hyperparameter, also outperforms the very expensive and exhaustive grid search procedures.
See Appendix \ref{add_exp1} for details on effects of tuning $\alpha$.
Moreover, experiment results in Section \ref{sec4} imply that a certain trade-off strategy may not always produce promising results on different datasets.
Unlike most MOO methods which are bound up with certain fixed trade-offs, GradOPS is also less affected and more stable since it flexibly supports different trade-offs and always ensures that all tasks are updated toward directions with non-decreasing performance.



Lastly, it's worth mentioning that GradNorm \citep{GradNorm} can also dynamically adjust gradient norms.
However, its target of balancing the training rates of different tasks can be biased since relationship of the gradient directions are ignored, which can be improved by taking real update magnitude on each task into account as we do in Eq. \ref{eq3}.

\section{Experiments} \label{sec4}

\begin{table}[t]
\caption{Experiment results on UCI Census-income dataset.
The best scores are shown in bold and the second-best scores are underlined.
Note a slight increase in AUC at 0.001-level is known to be a significant improvement in MTL task \citep{bai2022contrastive}.}
\centering
\small
\begingroup
\renewcommand{\arraystretch}{1.1}
\resizebox{\columnwidth}{!}{%
\begin{tabular}{ccccccc}
\toprule
Method               & Income          & Marital         & Education       & Average         & $\Delta m\%$   & MR            \\ \hline
Single-task           & 0.9454          & 0.9817          & \underline{0.8875}    & 0.9382          & 0.00           & 4.33          \\
Uniform scaling       & 0.9407          & 0.9763          & 0.8829          & 0.9333          & 0.52           & 14.00         \\ \hline
GradNorm              & 0.9435          & 0.9829          & 0.8873          & 0.9379          & 0.03           & 6.00          \\
PCGrad                & 0.9428          & 0.9812          & 0.8856          & 0.9365          & 0.18           & 11.67         \\
GradVac               & 0.9436          & 0.9798          & 0.8856          & 0.9363          & 0.20           & 10.67         \\
MGDA-UB               & \textbf{0.9460} & 0.9833          & 0.8870          & \underline{0.9388}    & \underline{-0.06}    & \underline{3.00}    \\
CAGrad                & 0.9444          & 0.9820          & 0.8863          & 0.9376          & 0.07           & 7.00          \\
IMTL-G                & 0.9443          & 0.9826          & 0.8864          & 0.9378          & 0.05           & 6.33          \\
RotoGrad-R            & 0.9430          & 0.9816          & 0.8854          & 0.9367          & 0.17           & 11.67         \\ \hline
GradOPS($\alpha$=0)   & 0.9436          & 0.9817          & 0.8861          & 0.9371          & 0.12           & 9.33          \\
GradOPS($\alpha$=2)   & 0.9432          & 0.9769          & 0.8864          & 0.9355          & 0.28           & 10.67         \\
GradOPS($\alpha$=-1)  & 0.9443          & 0.9832          & 0.8866          & 0.9380          & 0.02           & 5.67          \\
GradOPS($\alpha$=-2)  & 0.9453          & \underline{0.9837}    & 0.8871          & 0.9387          & -0.05          & 3.33          \\
GradOPS($\alpha$=-3)  & \underline{0.9459}    & \textbf{0.9854} & \textbf{0.8876} & \textbf{0.9396} & \textbf{-0.15} & \textbf{1.33} \\
\bottomrule
\end{tabular}
}
\endgroup
\label{table1}
\end{table}

In this section, we evaluate our method on diverse multi-task learning datasets including two public benchmarks and one industrial large-scale recommendation dataset.
For the two public benchmarks, instead of using the frameworks implemented by different MTL methods with various experimental details, we present fair and reproducible comparisons under the unified training framework \citep{LibMTL}.
GradOPS inherits the hyperparameters of the respective baseline method in all experiments, except for one additional hyperparameter $\alpha$.
Source codes will be publicly available.

\begin{table*}[t]
\caption{Experimental results on NYUv2 dataset.
Arrows indicate the values are the higher the better ($\uparrow$) or the lower the better ($\downarrow$).
Best performance for each task is bold, with second-best underlined.}
\centering
\begingroup
\renewcommand{\arraystretch}{1.1}
\begin{tabular}{cccccccccccc}
\toprule
\multirow{3}{*}{Method} & \multicolumn{2}{c}{Segmentation}     & \multicolumn{2}{c}{Depth}                   & \multicolumn{5}{c}{Surface Normal}                                                           & \multirow{3}{*}{$\Delta_m\% \downarrow$} & \multirow{3}{*}{MR $\downarrow$} \\ \cmidrule(lr){2-3}\cmidrule(lr){4-5}\cmidrule(lr){6-10}
                        &                 &                    &                      &                      & \multicolumn{2}{c}{Angle Distance}      & \multicolumn{3}{c}{Within $t^{\circ}$ $\uparrow$}  &                                          &                                  \\
                        & mIoU $\uparrow$ & Pix Acc $\uparrow$ & Abs Err $\downarrow$ & Rel Err $\downarrow$ & Mean $\downarrow$ & Median $\downarrow$ & 11.25 $\uparrow$ & 22.5 $\uparrow$ & 30 $\uparrow$ &                                          &                                  \\ \hline
Single-task             & 28.46           & \textbf{55.78}     & 0.6674               & 0.2839               & 30.39             & \underline{23.68}         & \underline{24.56}      & \underline{47.98}     & 59.76          & \underline{0.00}                               & \underline{4.89}                       \\
Uniform scaling         & 27.50           & 53.31              & 0.6061               & 0.2627               & 32.80             & 27.61               & 19.85            & 41.53           & 53.68          & 6.50                                     & 12.33                            \\ \hline
GradNorm                & 26.54           & 52.69              & 0.5902               & \underline{0.2500}         & 31.40             & 26.00               & 21.62            & 44.02           & 56.29          & 3.10                                     & 8.00                             \\
PCGrad                  & 27.53           & 53.51              & 0.5980               & 0.2606               & 32.61             & 27.32               & 20.10            & 42.00           & 54.11          & 5.72                                     & 10.56                            \\
Gradvac                 & 28.33           & 54.66              & 0.5908               & 0.2530               & 32.74             & 27.50               & 19.79            & 41.67           & 53.87          & 5.16                                     & 9.67                             \\
MGDA-UB                 & 18.57           & 46.99              & 0.7015               & 0.2913               & \textbf{29.83}    & \textbf{23.27}      & \textbf{24.84}   & \textbf{48.63}  & \textbf{60.58} & 5.64                                     & 7.22                             \\
CAGrad                  & 27.07           & 54.14              & 0.5999               & 0.2598               & 30.93             & 25.35               & 22.30            & 45.09           & 57.34          & 1.93                                     & 8.00                             \\
IMTL-G                  & \underline{28.80}     & 55.14              & 0.5954               & 0.2554               & 30.66             & 25.12               & 22.46            & 45.45           & 57.76          & 0.36                                     & 5.44                             \\
RotoGrad-R              & 27.97           & 55.10              & 0.5932               & 0.2606               & 32.17             & 26.77               & 20.41            & 42.74           & 55.07          & 4.24                                     & 8.78                             \\ \hline
GradOPS($\alpha$=0)     & 27.08           & 54.40              & \underline{0.5862}         & 0.2545               & 32.12             & 26.86               & 20.69            & 42.69           & 54.89          & 4.32                                     & 8.67                             \\
GradOPS($\alpha$=1)     & 26.56           & 53.26              & 0.6055               & 0.2600               & 34.30             & 29.39               & 18.79            & 39.79           & 51.67          & 9.40                                     & 13.11                            \\
GradOPS($\alpha$=-0.5)  & \textbf{29.05}  & 54.90              & \textbf{0.5819}      & \textbf{0.2489}      & 31.75             & 26.40               & 20.78            & 43.31           & 55.68          & 2.48                                     & 5.89                             \\
GradOPS($\alpha$=-1)    & 28.45           & \underline{55.67}        & 0.5895               & 0.2550               & 30.46             & 24.85               & 22.73            & 45.91           & 58.20          & \textbf{-0.23}                           & \textbf{4.44}                    \\
GradOPS($\alpha$=-1.5)  & 27.34           & 54.71              & 0.6077               & 0.2509               & 30.34             & 24.80               & 22.75            & 45.94           & 58.34          & 0.43                                     & 5.44                             \\
GradOPS($\alpha$=-3)    & 20.75           & 48.28              & 0.6905               & 0.2782               & \underline{29.87}       & 23.75               & 24.10            & 47.79           & \underline{59.94}    & 4.73                                     & 7.56                             \\
\bottomrule
\end{tabular}
\endgroup
\label{table2}
\end{table*}

\paragraph{Compared methods.}
(1) Uniform scaling: minimizing $\sum_i{\mathcal{L}_i}$;
(2) Single-task: solving tasks independently;
(3) Existing loss reweighting methods: GradNorm \citep{GradNorm};
(4) Projection methods: PCGrad \citep{PCGrad} and GradVac \citep{GradVac};
(5) MOO methods: MGDA-UB \citep{MGDA18}, CAGrad \citep{CAGrad}, IMTL-G \citep{IMTL}, Rotate Only RotoGrad (RotoGrad-R for short) \citep{RotoGrad}.
Note that RotoGrad consists of two parts: Scale Only and Rotate Only, which are independent and can be applied separately, similar to IMTL-L and IMTL-G.
Since Scale Only RotoGrad and IMTL-L focus on loss weights and can also be combined with many methods including GradOPS by demand, we choose to compare with Rotate Only RotoGrad and IMTL-G instead of introducing additional functions like Scale Only RotoGrad or IMTL-L to both GradOPS and all other methods for fair comparison.

\paragraph{Evaluation.}
In addition to common evaluation metrics in each experiment, we follow \citep{maninis2019attentive, CAGrad, NashMTL} and report two metrics to capture the overall performance:
(1) Mean Rank (MR): The average rank of each method across the different tasks (lower is better).
A method receives the best value, MR = 1, if it ranks first in all tasks.
(2) $\Delta_m$: The average per-task performance drop of method $m$ with respect to the single-tasking baseline $b$: $\Delta_m = \frac{1}{T}\sum^{T}_{i=1}{(-1)^{l_i}(M_{m,i}-M_{b,i})/M_{b,i}}$ where $l_i=1$ if a higher value is better for a criterion $M_i$ on task $i$ and $0$ otherwise.

\subsection{Multi-Task Classification}

The UCI Census-income dataset \citep{Dua:2019} is a commonly used benchmark for multi-task learning, which contains 299,285 samples including 199,523 training examples and 99,762 test examples, and 40 features extracted from the 1994 census database.
Referring to the experimental settings in \citet{MMoE}, we further randomly split test examples into a validation dataset and a test dataset by the fraction of 1:1, and construct three multi-task learning problems from this dataset by setting some of the features as prediction targets.
In detail, task Income aims to predict whether the income exceeds 50K, task Marital aims to predict whether this person’s marital status is never married, and task Education aims to predict whether the education level is at least college.
We apply a 2-layer fully-connected ReLU-activated neural network with 192 hidden units of each layer for all methods.
We also place one dropout layer \citep{dropout} with $p=0.3$ for regularization.
For GradNorm \citep{GradNorm} and CAGrad \citep{CAGrad} baselines which have additional hyperparameters, we follow the original papers and search $\alpha \in \{0.5, 1.5, 2.0\}$ for GradNorm and $c \in \{0.1, 0.2, ..., 0.9\}$ for CAGrad with the best average performance of the three tasks on validation dataset ($\alpha=1.5$ for GradNorm and $c=0.5$ for CAGrad).
We train each method for 200 epochs with batch-size of 1024, and the Adam optimizer with a learning rate of $1e-4$.
Since all tasks are binary classification problems, we use the Area Under Curve (AUC) scores as the evaluation metrics.

Experimental results for all methods are summarized in Table \ref{table1}.
The reported results are the average performance of 10 repeated experiments with random parameter initialization.
As shown, GradOPS($\alpha$=0) outperforms the projection-based method PCGrad in all tasks, indicating the benefit of our proposed strategy which projects each task-specified gradient onto the subspace orthogonal to the span of the other gradients to deconflict gradients.
Then, we compare the performance of GradOPS with different $\alpha$ and the other MOO methods.
We note that GradOPS($\alpha$=-1) recovers CAGrad and IMTL-G, and GradOPS($\alpha$=-2) approximates MGDA-UB.
These results show that GradOPS with a proper $\alpha<0$ can roughly recover the final performance of CAGrad, IMTL-G and MGDA-UB.
The reason why RotoGrad-R does not perform as well as the other MOO methods may be that RotoGrad-R is applied to the gradients with respect to the feature space instead of the shared parameters as the other methods do. 
GradOPS($\alpha$=-3) achieves the best performance in terms of Marital, Education and Average.
Although GradOPS($\alpha$=2) doesn't achieve as good experimental results as GradOPS with $\alpha=\{-1,-2,-3\}$ do, it outperforms Uniform scaling baseline.
And we find that for this dataset, almost any value of $-10 \leq \alpha \leq 5$ will improve Average performance over Uniform scaling baseline (see Appendix \ref{add_exp2} for details).
It suggests that the way to deconflict gradients proposed by us is beneficial for MTL.

\subsection{Scene Understanding}

NYUv2 \citep{silberman2012indoor} is an indoor scene dataset which contains 3 tasks: 13 class semantic segmentation, depth estimation, and surface normal prediction.
We follow the training and evaluation procedure used in previous work on MTL \citep{liu2019end, PCGrad} and apply SegNet \citep{badrinarayanan2017segnet} as the network structure for all methods.
For GradNorm \citep{GradNorm} and CAGrad \citep{CAGrad} baselines, we follow the hyperparameters settings in the original papers with $\alpha=1.5$ for GradNorm and $c=0.4$ for CAGrad.
Each method is trained for 200 epochs with a batch size of 2, using the Adam optimizer with a learning rate of $1e-4$.
The learning-rate is halved to $5e-5$ after 100 epochs.

The results are presented in Table \ref{table2}.
Each experiment is repeated 3 times over different random parameter initialization and the mean performance is reported.
Our methods GradOPS($\alpha$=-1.5) achieves comparable performance with IMTL-G.
GradOPS($\alpha$=-0.5) and GradOPS($\alpha$=-1) achieve the best $\Delta_m$ and MR respectively.
As surface normal estimation owns the smallest gradient magnitude, MGDA-UB focuses on learning to predict surface normals and achieves poor performance on the other two tasks, which is similar to GradOPS with a lower $\alpha=-3$.
It is worth noting that although MGDA-UB has overall good results on UCI Census-income dataset, yet it has not as good $\Delta_m$ on this dataset, which suggesting that it is hard to generalize to different domains.

\begin{table}[t]
\caption{Experiment Results on the real large-scale recommendation system.
The best and runner up results in each column are bold and underlined, respectively.}
\centering
\small
\begingroup
\renewcommand{\arraystretch}{1.1}
\resizebox{\columnwidth}{!}{%
\begin{tabular}{ccccccc}
\toprule
Method             & Purchase        & Cart            & Wish            & Average         & $\Delta m\%$   & MR            \\ \hline
Single-task        & 0.8149          & 0.7616          & 0.8095          & 0.7953          & 0.00           & 7.33          \\
Uniform scaling    & 0.8178          & 0.7606          & 0.8087          & 0.7957          & -0.04          & 9.00          \\ \hline
GradNorm           & 0.8188          & 0.7609          & 0.8084          & 0.7960          & -0.08          & 6.67          \\
PCGrad             & 0.8175          & 0.7609          & 0.8089          & 0.7958          & -0.05          & 8.67          \\
GradVac            & 0.8182          & 0.7609          & 0.8087          & 0.7959          & -0.07          & 8.33          \\
IMTL-G             & 0.8181          & 0.7602          & 0.8097          & 0.7960          & -0.08          & 8.00          \\ \hline
GradOPS($\alpha$=0)    & \underline{0.8190}    & \underline{0.7620}    & 0.8095          & \textbf{0.7968} & \textbf{-0.19} & \textbf{3.33} \\
GradOPS($\alpha$=1)    & 0.8186          & \underline{0.7620}    & 0.8067          & 0.7958          & -0.05          & 6.67          \\
GradOPS($\alpha$=2)    & \textbf{0.8199} & \textbf{0.7625} & 0.8073          & 0.7966          & -0.15          & 4.33          \\
GradOPS($\alpha$=-0.5) & 0.8185          & 0.7618          & \underline{0.8101}    & \textbf{0.7968} & \underline{-0.18}    & \underline{4.00}    \\
GradOPS($\alpha$=-1)   & 0.8187          & 0.7616          & 0.8099          & \underline{0.7967}    & -0.17          & 4.33          \\
GradOPS($\alpha$=-2)   & 0.8178          & 0.7604          & \textbf{0.8117} & 0.7966          & -0.16          & 7.33          \\
\bottomrule
\end{tabular}
}
\endgroup
\label{table3}
\end{table}

\subsection{Large-scale Recommendation}

In this subsection, we conduct offline experiments on a large-scale recommendation system in Taobao Inc. to evaluate the performance of proposed methods.
We collect an industrial dataset through sampling user logs from the recommendation system during consecutive 7 days.
There are 46 features, more than 300 million users, 10 millions items and about 1.7 billions samples in the dataset.
We consider predicting three post-click actions: purchase, add to shopping cart (Cart) and add to wish list (Wish).
We implement the network as a feedforward neural network with 4 fully-connected layers with ReLU activation.
The hidden units are fixed for all models with hidden size \{1024,512,256,128\}.
All methods are trained with Adam optimizer with a learning rate of $1e-3$, and we adopt AUC scores as the evaluation metrics.

Results are shown in Table \ref{table3}.
As wish prediction is the dominated task in this dataset, GradOPS with a negative $\alpha=\{-0.5, -1, -2\}$ show better performance than GradOPS($\alpha$=0) in wish task, and GradOPS with a positive $\alpha=\{1, 2\}$ achieve better performance in the other two tasks.
Our proposed GradOPS can successfully find a set of well-distributed solutions with different trade-offs, and it outperforms the single-task baseline where each task is trained with a separate model.
This experiment also shows that our method remains effective in real world large-scale applications.

\subsection{Effectiveness of Trade-offs in GradOPS} \label{add_exp3}

\begin{figure}[t]
\begin{center}
\includegraphics[width=0.42\textwidth]{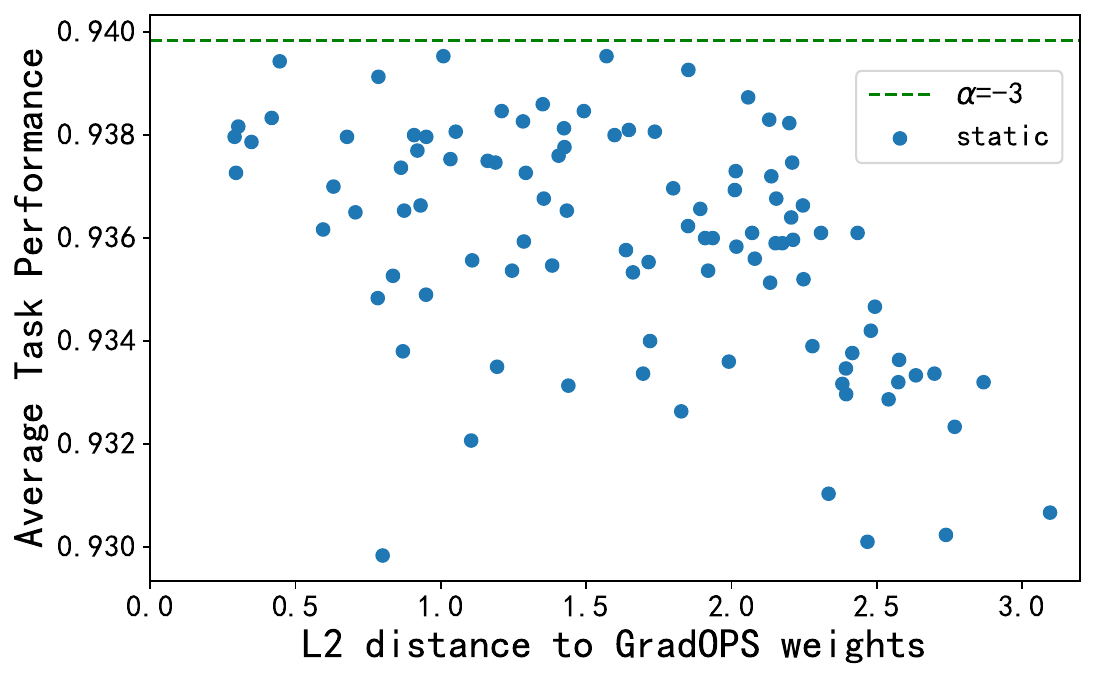}
\end{center}
\caption{Performance comparison between GradOPS($\alpha$=-3) and GradOPS-static with grid search weights $w^{\rm static}_i$.
The x-axis denotes the L2 distance between $w^{\rm static}_i$ and the average weights of GradOPS($\alpha$=-3) over training steps.
}
\label{appendix:figure5}
\end{figure}


To further verify the effectiveness of the strategy introduced in Section \ref{sec3.2} for obtaining $w_i$, we conduct a comparative experiment where we train GradOPS with static $w^{\rm static}_i$ (referred to as GradOPS-static) on UCI Census-income dataset.
$\{w^{\rm static}_i\}$ are considered as hyperparameters, and are sampled from candidates generated by grid search.
The sum of $\{w^{\rm static}_i\}$ is guaranteed to equal $T = 3$, and $G'_{\rm static} = \sum_i{w^{\rm static}_i g'_i}$ is used as the final update gradient.
Then, we compare the performance of GradOPS-static to our GradOPS($\alpha$=-3).

The results are shown in Figure \ref{appendix:figure5}.
We train GradOPS-static 100 times with different combinations of $\{w^{\rm static}_i\}$.
Even after this exhaustive grid search, GradOPS-static still falls short of GradOPS($\alpha$=-3).
Therefore, GradOPS can find the optimal grid search weights in one single training run.

\subsection{Importance of Ensuring Strong-nonconflicting Gradients}
Section 4.4 already indicated that our method can effectively perform trade-offs among the tasks simply via non-negative linear combinations of task gradients, here we will further demonstrated that such simple and flexible re-weighting strategy is more efficient when given strong non-conflicting gradients, which makes it one of the major advantages of GradOPS. 
For this purpose, similar dynamic weighting (DW) strategy is also applied to both the original gradients and PCGrad-modified gradients on the UCI dataset for comparison, with minor adjustment by replacing $R_i$ in Eq.(4) with $\exp(R_i)$ for non-negative requirement in Eq.(5).
Results are provided in Table \ref{table_dw}. 
Firstly, best average AUCs achieved by GradOPS($\alpha$=-3) outperforms than that achieved by similar re-weightings on original gradient DW($\alpha$=-1) and PCGrad-DW($\alpha$=-1), indicating such trade-offs are much more effective when given strong non-conflicting gradients. 
Secondly, PCGrad with negative dynamic weight (DW) $\alpha$ (-1, -2, -3) achieved improvement on Income, Marital and Average AUCs over the vanilla PCGrad-DW($\alpha$=0) (i.e., the original PCGrad) at the expense of undermining Task Education, such results are expected since PCGrad fails to guarantee strong non-conflicting gradients. 
In comparison, no tasks are sacrificed by GradOPS($\alpha$=-3) with similar strategy. Note that with increasing task numbers, without strong non-conflicting guarantees, the conflicts between PCGrad-modified gradients and the original gradients is expected to be more and more frequent, which will eventually result in increasing improvement in performance for GradOPS over PCGrad.



\begin{table}[t]
\caption{Performance of applying similar trade-off strategies used in GradOPS to both the original gradients and PCGrad are compared to demonstrate the importance of ensuring strong non-conflicting gradients in making trade-offs among the tasks. For each method, best result with the highest Average AUC is marked in gray and improvements over the same method without re-weighting (i.e., $\alpha$=0) are presented in brackets.}
\centering
\small
\begingroup
\renewcommand{\arraystretch}{1.1}
\resizebox{\columnwidth}{!}{%
\begin{tabular}{ccccccc}
\toprule
Method                 & Income         & Marital        & Education       & Average        \\ \hline
Single-task            & 0.9454         & 0.9817         & 0.8875          & 0.9382         \\ \hline
Uniform scaling        & 0.9407         & 0.9763         & 0.8829          & 0.9333         \\
\rowcolor[HTML]{EFEFEF} 
DW($\alpha$=-1)        & 0.9433(0.0026) & 0.9796(0.0032) & 0.8830(0.0001)  & 0.9353(0.0020) \\
DW($\alpha$=-2)        & 0.9432(0.0026) & 0.9779(0.0015) & 0.8814(-0.0015) & 0.9342(0.0009) \\
DW($\alpha$=-3)        & 0.9423(0.0016) & 0.9783(0.0019) & 0.8804(-0.0025) & 0.9337(0.0004) \\ \hline
PCGrad                 & 0.9428         & 0.9812         & 0.8856          & 0.9365         \\
\rowcolor[HTML]{EFEFEF} 
PCGrad-DW($\alpha$=-1) & 0.9451(0.0023) & 0.9849(0.0037) & 0.8852(-0.0004) & 0.9384(0.0019) \\
PCGrad-DW($\alpha$=-2) & 0.9449(0.0021) & 0.9838(0.0026) & 0.8846(-0.0010) & 0.9378(0.0012) \\
PCGrad-DW($\alpha$=-3) & 0.9446(0.0018) & 0.9839(0.0028) & 0.8830(-0.0026) & 0.9372(0.0007) \\ \hline
GradOPS($\alpha$=0)    & 0.9436         & 0.9817         & 0.8861          & 0.9371         \\
GradOPS($\alpha$=-1)   & 0.9443(0.0007) & 0.9832(0.0015) & 0.8866(0.0004)  & 0.9380(0.0009) \\
GradOPS($\alpha$=-2)   & 0.9453(0.0018) & 0.9837(0.0020) & 0.8871(0.0010)  & 0.9387(0.0016) \\
\rowcolor[HTML]{EFEFEF} 
GradOPS($\alpha$=-3)   & 0.9459(0.0023) & 0.9854(0.0037) & 0.8876(0.0015)  & 0.9396(0.0025) \\
\bottomrule
\end{tabular}
}
\endgroup
\label{table_dw}
\end{table}

\section{Related Work}
Ever since the successful applications of multi-task learning in various domains, many methods have been released in pursuit of fully exploiting the benefits of MTL.
Early branches of popular approaches heuristically explore shared experts and task-relationship modelling via gating networks \citep{misra2016cross,MMoE,MCBD,liu2019end,PLE,AITM}, loss reweighting methods to ensure equal gradient norm for each tasks \citep{GradNorm} or to uplift losses toward harder tasks \citep{taskPriority,uncertaintyWeighting}.
GradOPS is agnostic to model architectures and loss manipulations, thus can be combined with these methods by demands.

Recent trends in MTL focus on gradient manipulations toward certain objectives.
\citet{GradDrop} simply drops out task gradients randomly according to conflicting ratios among tasks.
\citet{RotoGrad} focuses on reducing negative transfer by scaling and rotating task gradients.
Existing projection based methods explicitly study the conflicting gradients problem and relationships among task gradients.
Some methods precisely alter each task gradient toward non-negative inner product \citep{PCGrad} or certain values of gradient similarity \citep{GradVac} with other tasks, iteratively.
GradOPS makes further improvements over these methods, as stated in Section \ref{sec3.1}.

Originated from \citep{MGDA12}, some methods seek to find Pareto optimals of MTL as solutions of multi-objective optimization (MOO) problems.
\citet{MGDA18} apply the Frank-Wolfe algorithm to solve the MOO problem defined in \citet{MGDA12}; \citet{PeMTL} choose to project the closed-form solutions of a relaxed quadratic programming to the feasible space, hoping to achieve paretion-efficiency.
More recent methods seek to find Pareto optimals with certain trade-off among the tasks instead of finding an arbitrary one.
\citet{CAGrad} seeks to maximize the worst local improvement of individual tasks to achieve Pareto points with minimum average loss.
IMTL-G \citep{IMTL} seeks solutions with balanced performance among all tasks by searching for an update vector that has equal projections on each task gradient.
Note that some methods have already been proposed to provide Pareto solutions with different trade-offs.
\citet{ParetoMTL} explicitly splits the loss space into independent cones and applies constrained MOO methods to search single solution for each cone; \citet{EPO} provides the ability of searching solutions toward certain desired directions determined by the input preference ray $r$; \citet{PHN} trains a hypernetwork, with preference vector $r$ as input, to directly predict weights of a certain MTL model with losses in desired ray $r$.
Comparison of GradOPS and MOO methods are discussed in Section \ref{sec3.3}.

Finally, as discussed in \citet{PCGrad}, we also state the difference of GradOPS with gradient projection methods \citep{GEM, AGEM, OGD} applied in continual learning, which mainly solve the catastrophic forgetting problem in sequential lifelong learning.
GradOPS is distinct from these methods in two aspects: (1) instead of concentrate only on conflicts of current task with historical ones, GradOPS focus on gradient deconfliction among all tasks within the same batch simultaneously to allow mutually positive transfer among tasks, (2) GradOPS aims to provide effective trade-offs among tasks with the non-conflicting gradients.

\section{Conclusion}
In this work, focusing on the conflicting gradients challenge in MTL, we introduce the idea of strong non-conflicting gradients, and further emphasize the advantages of acquiring such gradients:
it will be much easier to apply varying trade-offs among the tasks simply via different non-negative linear combinations of the deconflicted gradients, which are all guaranteed to be non-conflicting with each original tasks. Thus, solutions toward different trade-offs with stable performance can also be achieved since all tasks are updated toward directions with non-decreasing performance in each training step for all trade-off preferences. 
To fully exploit such advantages, we propose a simple algorithm (GradOPS) that ensures strong non-conflicting gradients, based on which a simple reweighting strategy is implemented to provides effective trade-offs among the tasks.
Comprehensive experiments show that GradOPS achieves state-of-the-art results and is also flexible enough to achieve similar trade-offs with some of the existing MOO methods and even outperforms them. Based on our works and findings, we believe that ensuring strong non-conflicting gradients and studying more delicate and efficient trade-off strategies shall be an interesting and promising future research direction in MTL.

\bibliographystyle{ACM-Reference-Format}
\balance
\bibliography{tex/ref}

\setcounter{page}{0}
\newpage
\appendix
\section{Detailed Derivation} \label{derivation}

\subsection{Proof of Theorem 1} \label{Theorem1}
\begin{theorem} 
Assume individual loss functions $\mathcal{L}_1, \mathcal{L}_2, ..., \mathcal{L}_T$ are differentiable.
Suppose the gradient of $\mathcal{L}$ is L-Lipschitz with $L>0$.
Then with the update step size $t < \frac{2}{TL}$, GradOPS in Section \ref{sec3.1} will converge to a Pareto stationary point.
\end{theorem}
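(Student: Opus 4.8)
The plan is to run a standard descent-lemma argument on the aggregate loss $\mathcal{L} = \sum_{i=1}^{T}\mathcal{L}_i$, exploiting the key property established in Section~\ref{sec3.1} that the GradOPS update direction $G' = \sum_i g'_i$ satisfies $G' \cdot g_j \geq 0$ for every $j$ (the ``weak non-conflicting'' guarantee), together with the fact that each $g'_i$ is either $g_i$ itself or an orthogonal projection of $g_i$, so that $g'_i \cdot g_i = \|g'_i\|^2$ and hence $G' \cdot G = \sum_{i,j} g'_i \cdot g_j \geq \sum_i g'_i \cdot g_i = \sum_i \|g'_i\|^2 \geq 0$. First I would write $\theta^{+} = \theta - t\,G'$ and apply the $L$-Lipschitz (descent) inequality to $\mathcal{L}$: $\mathcal{L}(\theta^{+}) \leq \mathcal{L}(\theta) - t\,G\cdot G' + \tfrac{L t^2}{2}\|G'\|^2$. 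The first-order term is controlled from below using the observation above, $G\cdot G' \geq \sum_i \|g'_i\|^2$, while the quadratic term is bounded via $\|G'\|^2 = \|\sum_i g'_i\|^2 \leq T\sum_i \|g'_i\|^2$ by Cauchy--Schwarz. Combining, $\mathcal{L}(\theta^{+}) \leq \mathcal{L}(\theta) - t\bigl(1 - \tfrac{TLt}{2}\bigr)\sum_i \|g'_i\|^2$, so the choice $t < \tfrac{2}{TL}$ makes the coefficient $t(1 - TLt/2)$ strictly positive and $\mathcal{L}$ is monotonically non-increasing, strictly decreasing unless all $g'_i = \mathbf{0}$.

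Next I would argue convergence of the iterates' stationarity measure. Since $\mathcal{L}$ is bounded below (individual losses bounded below, or simply assume it) and non-increasing, the telescoped sum gives $\sum_{n} t(1 - TLt/2)\sum_i \|g'_i(\theta_n)\|^2 < \infty$, hence $\sum_i \|g'_i(\theta_n)\|^2 \to 0$, so $g'_i(\theta_n) \to \mathbf{0}$ for every $i$ along the trajectory (and at any limit point $\theta^\star$, $g'_i(\theta^\star) = \mathbf{0}$ for all $i$ by continuity of the gradients and of the Gram--Schmidt/projection map away from degeneracies; the degenerate case is handled by the MGDA fallback in Algorithm~\ref{alg1}, which directly returns a Pareto-stationary direction). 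The final step is to translate ``$g'_i = \mathbf{0}$ for all $i$'' into Pareto stationarity: if $g'_i = \mathbf{0}$ then $g_i \in \mathrm{span}\{g_j\}_{j\neq i}$, i.e.\ $g_i = \sum_{j\neq i} c_j g_j$ for some scalars, which is a nontrivial linear dependence $\sum_j a_j g_j = \mathbf{0}$ with $a_i = 1 \neq 0$; one then has to upgrade this to a dependence with \emph{nonnegative} coefficients summing to one. Here I would use the case analysis already implicit in GradOPS: $g'_i = \mathbf{0}$ means the component of $g_i$ orthogonal to the others vanishes, and combined with the conflict-check (GradOPS only projects when some $g_i\cdot g_k < 0$), the simultaneous vanishing of all $g'_i$ forces the $\{g_i\}$ into a configuration from which a convex combination equal to zero can be extracted — concretely, $\mathbf{0}$ lies in the convex hull of $\{g_i\}$, which is exactly the definition of Pareto stationarity.

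The main obstacle I anticipate is precisely that last implication: showing that $g'_i = \mathbf{0}\ \forall i$ yields a \emph{convex} (nonnegative, normalized) combination of the $g_i$ equal to zero, rather than merely an affine/linear dependence. A clean route is to note that at such a point no strict descent direction for all tasks simultaneously can exist — if $\mathbf{0}\notin \mathrm{conv}\{g_i\}$, a separating hyperplane would give a common descent direction $d$ with $d\cdot g_i < 0$ for all $i$, and one can check GradOPS would produce a nonzero update along $d$, contradicting $G' = \mathbf{0}$; hence $\mathbf{0}\in\mathrm{conv}\{g_i\}$, i.e.\ $\theta^\star$ is Pareto stationary. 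A secondary technical point worth care is the continuity/measurability of the map $g \mapsto g'$ (Gram--Schmidt can be discontinuous where the $g_j$ become linearly dependent), which is why the algorithm's explicit MGDA fallback for the degenerate case is not just a practical patch but part of the convergence argument, and I would state the ``mild assumptions on the neural network'' as exactly what rules out getting stuck on that measure-zero set.
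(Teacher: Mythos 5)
Your descent-lemma portion is essentially identical to the paper's proof: the same expansion of $G\cdot G'$ using $g'_i\cdot g_j\geq 0$ for all $i,j$ and $g_i\cdot g'_i=\Vert g'_i\Vert^2$, the same bound $\Vert G'\Vert^2\leq T\sum_i\Vert g'_i\Vert^2$, and the same conclusion that $\mathcal{L}$ decreases by at least $t\left(1-\tfrac{1}{2}TLt\right)\sum_i\Vert g'_i\Vert^2$ per step when $t<\tfrac{2}{TL}$, so progress stalls only when every $g'_i=\mathbf{0}$. Your telescoping argument under a boundedness-below assumption is, if anything, slightly more careful than the paper's statement that repeatedly applying the update ``can reach'' such a point.

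The genuine gap is in your final step. You claim that $g'_i=\mathbf{0}$ for all $i$ forces $\mathbf{0}\in\mathrm{conv}\{g_i\}$, arguing by contradiction that a separating hyperplane would give $d$ with $d\cdot g_i<0$ for all $i$ and that ``GradOPS would produce a nonzero update along $d$, contradicting $G'=\mathbf{0}$.'' That check fails: the projection step can return $G'=\mathbf{0}$ at points that are not Pareto stationary. Take $T=3$ in $\mathbb{R}^2$ with unit gradients $g_1,g_2,g_3$ at angles $0^\circ$, $100^\circ$, $170^\circ$. Then $g_1\cdot g_2<0$ and $g_1\cdot g_3<0$, so every $g_i$ has a conflict and gets projected; for each $i$ the other two gradients span $\mathbb{R}^2$, so the projection onto the orthogonal complement gives $g'_i=\mathbf{0}$ for all $i$. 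Yet all three gradients lie strictly in the open half-plane $\{x:x\cdot d>0\}$ for $d$ at angle $85^\circ$, so $\mathbf{0}\notin\mathrm{conv}\{g_i\}$ and $-d$ is a common descent direction: the deconfliction step produces the zero update despite the point not being Pareto stationary, so the implication you want is simply false. The paper's own proof concedes exactly this: vanishing of all $g'_i$ yields only a nontrivial \emph{linear} dependence among the $g_i$, and it splits into two cases --- if a convex combination vanishes, the point is Pareto stationary; otherwise the explicit MGDA fallback in Algorithm~\ref{alg1} takes over and supplies a nonzero non-conflicting update, and convergence in that branch rests on MGDA's guarantee rather than on the projection step. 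To repair your argument you must route this degenerate case through the fallback (as you already do for the Gram--Schmidt continuity issue), rather than asserting that $G'=\mathbf{0}$ by itself certifies $\mathbf{0}\in\mathrm{conv}\{g_i\}$; the iterations on which the fallback fires also need their own descent estimate before the telescoping sum applies.
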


\begin{proof}
As $\mathcal{L}$ is differential and L-smooth, we can obtain the following inequality:
\begin{align}
\mathcal{L}(\theta') &\leq \mathcal{L}(\theta)+\nabla \mathcal{L}(\theta)^T(\theta'-\theta)+\frac{1}{2}\nabla^2\mathcal{L}(\theta)\Vert{\theta'-\theta}\Vert^2 \nonumber \\
&\leq \mathcal{L}(\theta)+\nabla \mathcal{L}(\theta)^T(\theta'-\theta)+\frac{1}{2}L\Vert{\theta'-\theta}\Vert^2 
\label{appendix:eq1}
\end{align}

Plugging in the GradOPS update by letting $\theta' = \theta - tG'$, we can conclude the following:
\begin{align}
\mathcal{L}(\theta') &\leq \mathcal{L}(\theta) - \left(tG \cdot G' - \frac{1}{2}Lt^2 \Vert{G'}\Vert^2 \right) \nonumber \\
&\text{(Expanding, using the identity $G=\sum_i{g_i}, G'=\sum_i{g'_i}$)} \nonumber \\
&= \mathcal{L}(\theta) - t \left(\sum_i{g_i} \cdot \sum_i{g'_i} - \frac{1}{2} Lt \Vert{\sum_i{g'_i}}\Vert^2 \right) \nonumber \\
&= \mathcal{L}(\theta) - t \left(\sum_{i,j}{(g_i \cdot g'_j)} - \frac{1}{2} Lt \sum_{i,j}{(g'_i \cdot g'_j)} \right) \nonumber \\
&\text{(Using the inequality $g_i \cdot g'_j \geq 0, \forall i,j$ from Section \ref{sec3.1})} \nonumber \\
&\leq \mathcal{L}(\theta) - t\left(\sum_i{(g_i \cdot g'_i)} - \frac{1}{2} Lt \sum_{i,j}{(g'_i \cdot g'_j)} \right) \nonumber \\ 
&\text{(Using the inequality $g'_i \cdot g'_j \leq \Vert g'_i \Vert \cdot \Vert g'_j \Vert \leq \frac{1}{2}(\Vert g'_i \Vert^2 + \Vert g'_j \Vert^2)$)} \nonumber \\
&\leq \mathcal{L}(\theta) - t\left(\sum_i{(g_i \cdot g'_i)} - \frac{1}{2} Lt \sum_{i,j}{\frac{1}{2}(\Vert g'_i \Vert^2 + \Vert g'_j \Vert^2)} \right) \nonumber \\
&= \mathcal{L}(\theta) - t\left(\sum_i{(g_i \cdot g'_i)} - \frac{1}{2} TLt \sum_i{\Vert g'_i \Vert^2} \right) \nonumber \\
&\text{(Using the identity $g_i \cdot g'_i = \Vert g'_i \Vert^2, \forall i$)} \nonumber \\
&= \mathcal{L}(\theta) - t\left(\sum_i{\Vert g'_i \Vert^2} - \frac{1}{2} TLt \sum_i{\Vert g'_i \Vert^2} \right) \nonumber \\
&= \mathcal{L}(\theta) - t(1 - \frac{1}{2} TLt )\sum_i{\Vert g'_i \Vert^2}.
\label{appendix:eq2}
\end{align}
Note that $t(1 - \frac{1}{2} Ltn )\sum_i{\Vert g'_i \Vert^2} \geq 0$ when $t \leq \frac{2}{TL}$.
Further, when $t < \frac{2}{TL}$, $t(1 - \frac{1}{2} TLt )\sum_i{\Vert g'_i \Vert^2} = 0$ if and only if $\Vert g'_i \Vert^2 = 0, \forall i$, i.e., $g'_i = \textbf{0}, \forall i$.

Hence repeatedly applying GradOPS process with $t < \frac{2}{TL}$ can reach some point $\theta^*$ in the optimization landscape where $g'_i = \textbf{0}, \forall i$.
According to Section \ref{sec3.1}, we have $g_i = \textbf{0}$ or $g_i$ belongs to subspace $S = {\rm span}\{g_j\}_{j \neq i}$.
This means that there exists a non-zero linear combination of the gradients $\{g_i\}$ at this point $\theta^*$ that equals zero, which can be divided into two cases:
(1) There exists a convex combination of $\{g_i\}$ that equals zero and therefore $\theta^*$ is Pareto stationary under the definitions in Section \ref{sec2}.
(2) Otherwise, we can degenerate to use existing MOO methods such as multiple gradient descent algorithm \citep{MGDA12} to obtain the non-conflicting $G'$ to converge to certain Pareto stationary point.

\end{proof}

\subsection{Proof of Theorem 2} \label{Theorem2}
\begin{theorem}
Assume individual loss functions $\mathcal{L}_1, \mathcal{L}_2, ..., \mathcal{L}_T$ are differentiable.
Suppose the gradient of $\mathcal{L}$ is L-Lipschitz with $L>0$.
Then with the update step size $t < {\rm min}_{i,j}\frac{2 w_i \Vert g'_i \Vert^2}{TL w^2_j \Vert g'_j \Vert^2}$, GradOPS in Section \ref{sec3.2} will converge to a Pareto stationary point.
\end{theorem}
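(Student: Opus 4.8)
\textbf{Proof proposal for Theorem \ref{th2}.}

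The plan is to mimic the descent argument used in the proof of Theorem \ref{th1}, but now the update is $\theta' = \theta - t G'_{\rm new}$ with $G'_{\rm new} = \sum_i w_i g'_i$ rather than $\theta' = \theta - t G'$. First I would start from the $L$-smoothness inequality \eqref{appendix:eq1} and substitute $\theta' - \theta = -t\sum_i w_i g'_i$, obtaining
\begin{align}
\mathcal{L}(\theta') \leq \mathcal{L}(\theta) - t\Bigl(G \cdot G'_{\rm new} - \tfrac{1}{2} L t \Vert G'_{\rm new}\Vert^2\Bigr)
= \mathcal{L}(\theta) - t\Bigl(\sum_{i,j} w_j (g_i \cdot g'_j) - \tfrac{1}{2} L t \sum_{i,j} w_i w_j (g'_i \cdot g'_j)\Bigr). \nonumber
\end{align}
Then I would lower-bound the cross term using $g_i \cdot g'_j \geq 0$ (strong non-conflicting, Section \ref{sec3.1}) to keep only the diagonal $i=j$ contributions, and use the identity $g_i \cdot g'_i = \Vert g'_i\Vert^2$, giving a positive contribution $\sum_i w_i \Vert g'_i\Vert^2$. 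For the quadratic term I would bound $g'_i \cdot g'_j \leq \tfrac{1}{2}(\Vert g'_i\Vert^2 + \Vert g'_j\Vert^2)$ exactly as before, so that $\sum_{i,j} w_i w_j (g'_i \cdot g'_j) \leq \tfrac{1}{2}\sum_{i,j} w_i w_j (\Vert g'_i\Vert^2 + \Vert g'_j\Vert^2) = T \sum_i w_i^2 \Vert g'_i\Vert^2$ after collapsing one sum (using $\sum_j w_j = T$ from Eq. \eqref{eq5}). This leaves the bound
\begin{align}
\mathcal{L}(\theta') \leq \mathcal{L}(\theta) - t\Bigl(\sum_i w_i \Vert g'_i\Vert^2 - \tfrac{1}{2} T L t \sum_i w_i^2 \Vert g'_i\Vert^2\Bigr). \nonumber
\end{align}

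Next I would argue that the right-hand parenthesis is strictly positive whenever some $g'_i \neq \mathbf{0}$, which is where the step-size condition enters. It suffices that $t < \frac{2 w_i \Vert g'_i\Vert^2}{T L w_i^2 \Vert g'_i\Vert^2}$ term-by-term would work if the weights were the same, but since the sums mix different indices the safe, uniform sufficient condition is $t < \min_{i,j}\frac{2 w_i \Vert g'_i\Vert^2}{T L w_j^2 \Vert g'_j\Vert^2}$: under this bound, $\frac{1}{2} T L t\, w_j^2 \Vert g'_j\Vert^2 < w_i \Vert g'_i\Vert^2$ for every pair $(i,j)$, and summing over an appropriate pairing shows $\frac{1}{2} T L t \sum_i w_i^2 \Vert g'_i\Vert^2 < \sum_i w_i \Vert g'_i\Vert^2$, so $\mathcal{L}$ strictly decreases unless $g'_i = \mathbf{0}$ for all $i$ (note $w_i > 0$ by Eq. \eqref{eq5}, since $r_i > 0$ because each $R_i = \Vert g'_i\Vert^2 / \Vert g_i\Vert \cdot (\text{nonneg})$... more precisely $R_i \geq 0$ and the construction keeps $w_i$ strictly positive whenever the $r_i$ are). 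Thus repeated application drives the iterates to a point where $g'_i = \mathbf{0}$ for all $i$, and then the same dichotomy as in Theorem \ref{th1} applies: either a convex combination of $\{g_i\}$ vanishes, so $\theta^*$ is Pareto stationary, or we fall back to MGDA \citep{MGDA12} to reach a Pareto stationary point.

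I expect the main obstacle to be the bookkeeping in the quadratic term: getting the factor $T$ correct when collapsing $\sum_{i,j} w_i w_j(\Vert g'_i\Vert^2 + \Vert g'_j\Vert^2)$ requires carefully using $\sum_j w_j = T$, and then justifying that the stated $\min_{i,j}$ step-size bound is genuinely sufficient to make $\sum_i w_i \Vert g'_i\Vert^2 > \frac{1}{2} T L t \sum_i w_i^2 \Vert g'_i\Vert^2$. The cleanest way to handle the latter is to note that for the minimizing pair the inequality $\frac{1}{2} T L t\, w_j^2 \Vert g'_j\Vert^2 < w_i \Vert g'_i\Vert^2$ holds, and then a summation/averaging argument over indices transfers this to the sums; a secondary subtlety is confirming $w_i > 0$ for all $i$ so that the per-task descent contributions cannot be annihilated, which follows because the reweighting in Eq. \eqref{eq4}–\eqref{eq5} preserves positivity.
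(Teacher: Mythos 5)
Your overall strategy is exactly the paper's: the same $L$-smoothness descent inequality with $\theta'=\theta-tG'_{\rm new}$, the same use of strong non-conflicting gradients $g_i\cdot g'_j\ge 0$ and the identity $g_i\cdot g'_i=\Vert g'_i\Vert^2$ to keep the diagonal terms, the same target inequality $\mathcal{L}(\theta')\le\mathcal{L}(\theta)-t\bigl(\sum_i w_i\Vert g'_i\Vert^2-\tfrac12 TLt\sum_i w_i^2\Vert g'_i\Vert^2\bigr)$, and the same endgame (strict decrease unless $g'_i=\mathbf{0}$ for all $i$, positivity of the $w_i$ on the active tasks, and the dichotomy with the MGDA fallback). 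The paper's appendix proof is essentially this argument written tersely, with the min in the step size restricted to tasks with $g'_i\neq\mathbf{0}$.

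However, your handling of the quadratic term contains a genuine algebraic slip, and it is precisely the step you flagged as delicate. You claim
\begin{equation}
\tfrac{1}{2}\sum_{i,j} w_i w_j\bigl(\Vert g'_i\Vert^2+\Vert g'_j\Vert^2\bigr)=T\sum_i w_i^2\Vert g'_i\Vert^2
\end{equation}
by collapsing one sum via $\sum_j w_j=T$. That collapse actually gives $\bigl(\sum_j w_j\bigr)\sum_i w_i\Vert g'_i\Vert^2 = T\sum_i w_i\Vert g'_i\Vert^2$, i.e.\ the weights appear to the \emph{first} power, not squared; the claimed equality fails unless all $w_i=1$. With the first-power bound the descent condition you would obtain is essentially $t<\tfrac{2}{TL}$, which does not establish the theorem under the step size actually stated (that bound is neither implied by nor implies $\min_{i,j}\tfrac{2w_i\Vert g'_i\Vert^2}{TLw_j^2\Vert g'_j\Vert^2}$ in general). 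The correct route to the $w_i^2$ term — and the one implicit in the paper — is to apply the AM--GM bound to the \emph{weighted} gradients: $w_iw_j\,(g'_i\cdot g'_j)\le w_iw_j\Vert g'_i\Vert\Vert g'_j\Vert\le\tfrac12\bigl(w_i^2\Vert g'_i\Vert^2+w_j^2\Vert g'_j\Vert^2\bigr)$, and then sum over $i,j$; the factor $T$ comes from counting the $T$ values of the free index, not from $\sum_j w_j=T$. Once that is fixed, your "appropriate pairing" is simply the diagonal pairing $j=i$: the step-size condition gives $\tfrac12 TLt\,w_i^2\Vert g'_i\Vert^2< w_i\Vert g'_i\Vert^2$ for every active task, and summing yields strict descent whenever some $g'_i\neq\mathbf{0}$, after which your conclusion matches the paper's. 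One further small point: as stated, the min over all $(i,j)$ is degenerate when some $g'_i=\mathbf{0}$ (zero numerator or denominator), which is why the paper restricts the min to $i\in\mathcal{T}^+=\{i:\Vert g'_i\Vert^2>0\}$; your argument should do the same.
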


\begin{proof}
Similar to Section \ref{Theorem1}, plugging in Inequality \ref{appendix:eq1} by letting $\theta' = \theta - tG'_{\rm new}$, we can conclude the following:
\begin{align}
\mathcal{L}(\theta') &\leq \mathcal{L}(\theta) - \left(tG \cdot G'_{\rm new} - \frac{1}{2}Lt^2 \Vert{G'_{\rm new}}\Vert^2 \right) \nonumber \\
&\text{(Expanding, using the identity $G=\sum_i{g_i}, G'_{\rm new}=\sum_i{w_i g'_i}$)} \nonumber \\
&= \mathcal{L}(\theta) - t \left(\sum_i{g_i} \cdot \sum_i{w_i g'_i} - \frac{1}{2} Lt \Vert{\sum_i{w_i g'_i}}\Vert^2 \right) \nonumber \\
&\leq \mathcal{L}(\theta) - t\left(\sum_i{w_i \Vert g'_i \Vert^2} - \frac{1}{2} TLt \sum_i{w^2_i \Vert g'_i \Vert^2} \right).
\label{appendix:eq3}
\end{align}
If $g'_i = \textbf{0}, \forall i$, GradOPS reaches the Pareto stationary point.
Otherwise, we denotes $\mathcal{T}^+$ as the tasks set where $\Vert g'_i \Vert^2 > 0, i \in \mathcal{T}^+$.
Therefore, $w_i > 0, \forall i \in \mathcal{T}^+$ according to Section \ref{sec3.2}. 
In this case, with $t < {\rm min}_{i \in \mathcal{T}^+,j}\frac{2 w_i \Vert g'_i \Vert^2}{TL w^2_j \Vert g'_j \Vert^2}$, GradOPS can reach some point $\theta^*$ where $g'_i = \textbf{0}, \forall i$, i.e., the Pareto stationary point.
\end{proof}

\section{Implementation Details} \label{implement_detail}

\begin{figure*}[t]
\begin{center}
\subfigure[Multi-Task Objective]{\includegraphics[width=0.323\textwidth]{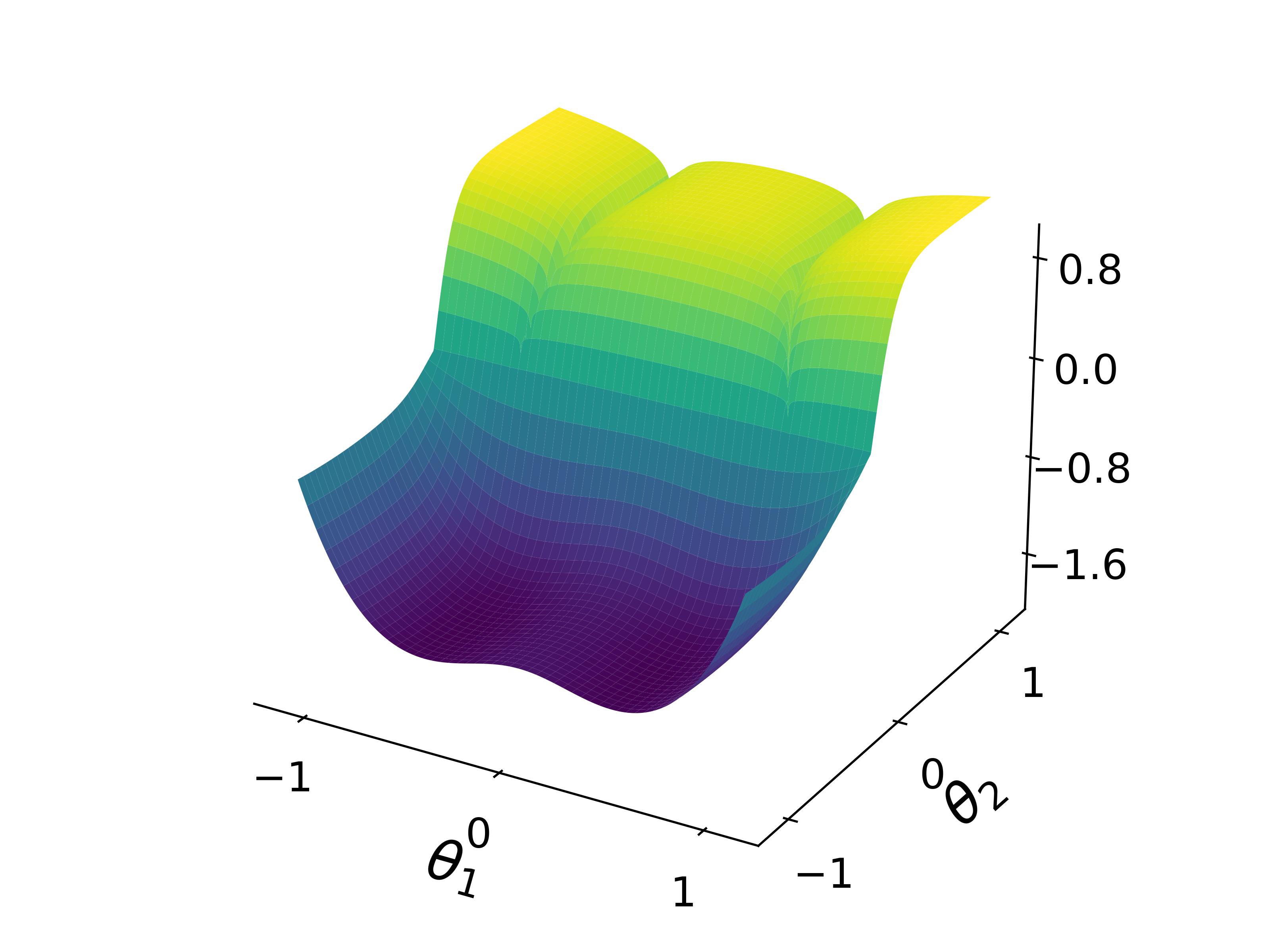} \label{appendix:figure1-a}}
\subfigure[Task1 Objective]{\includegraphics[width=0.323\textwidth]{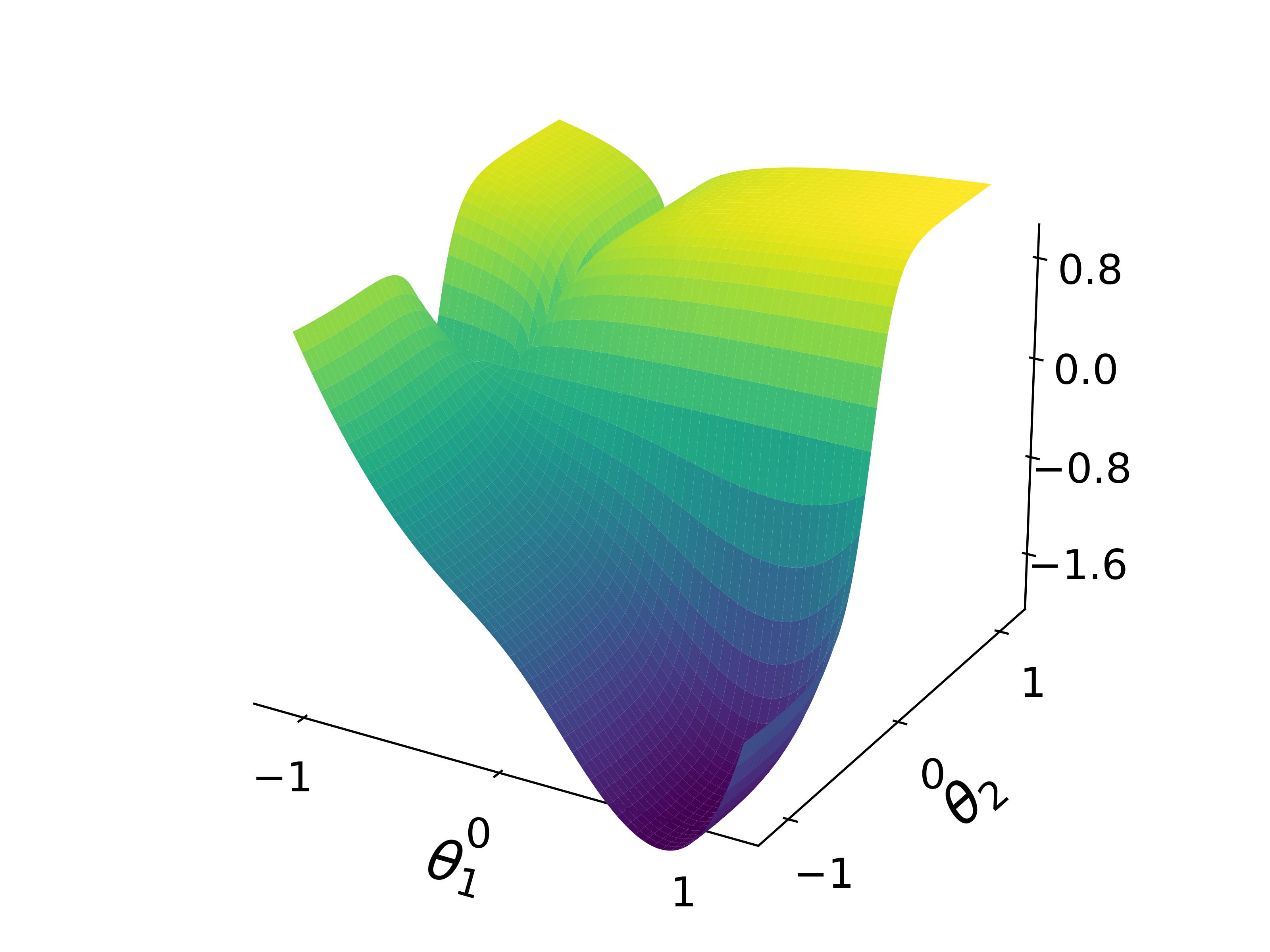} \label{appendix:figure1-b}}
\subfigure[Task2 Objective]{\includegraphics[width=0.323\textwidth]{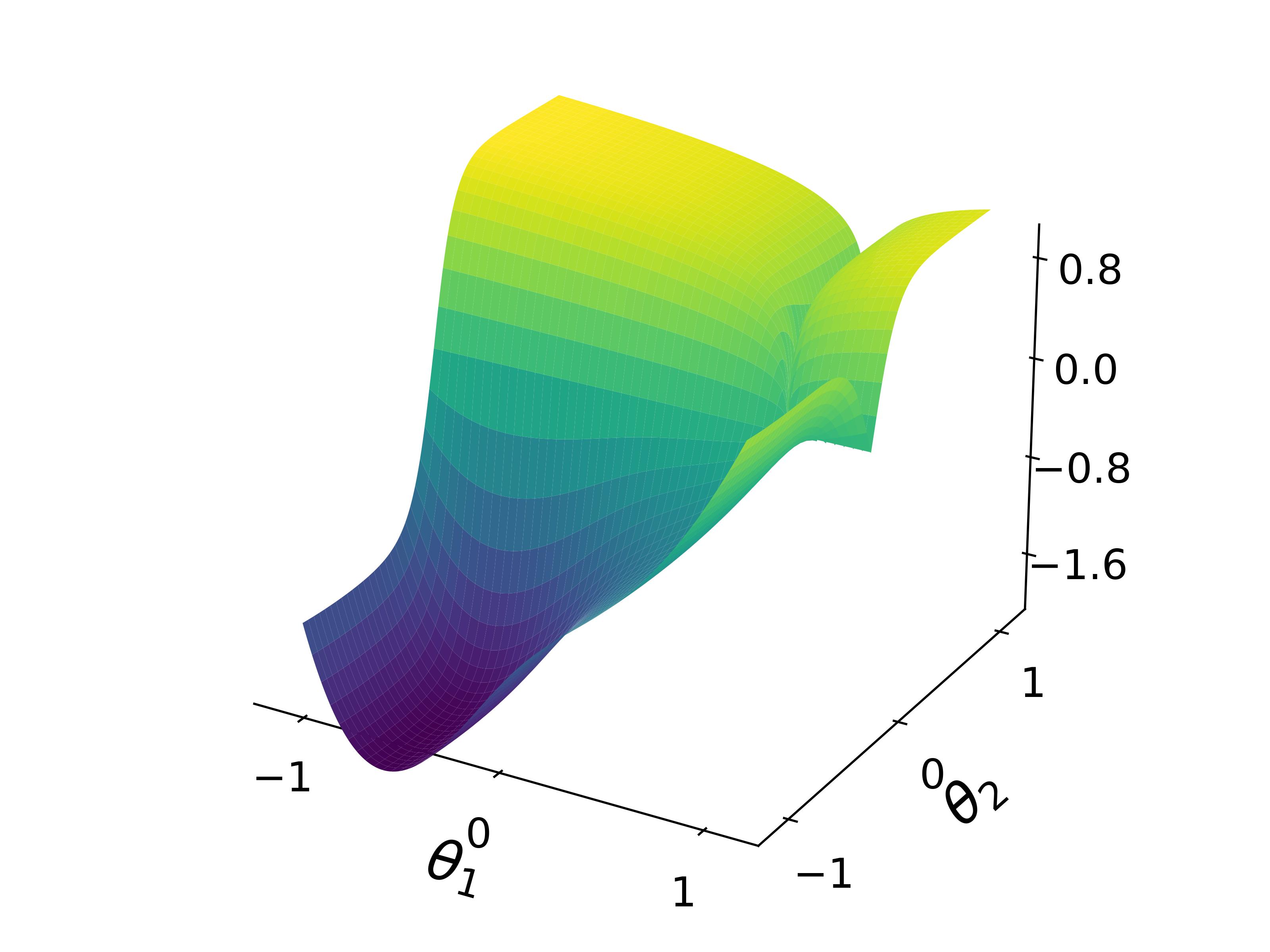} \label{appendix:figure1-c}}
\end{center}
\caption{Visualization of the loss surfaces of Figure \ref{figure2}.
}
\label{appendix:figure1}
\end{figure*}

\begin{figure*}[t]
\begin{center}
\subfigure[Multi-Task Objective]{\includegraphics[width=0.323\textwidth]{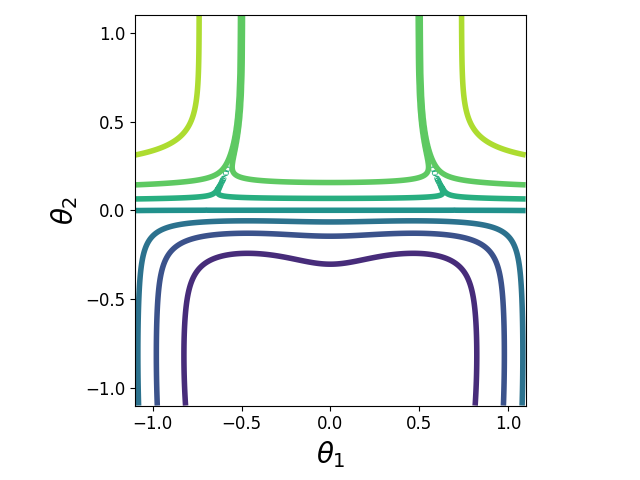} \label{appendix:figure2-a}}
\subfigure[Task1 Objective]{\includegraphics[width=0.323\textwidth]{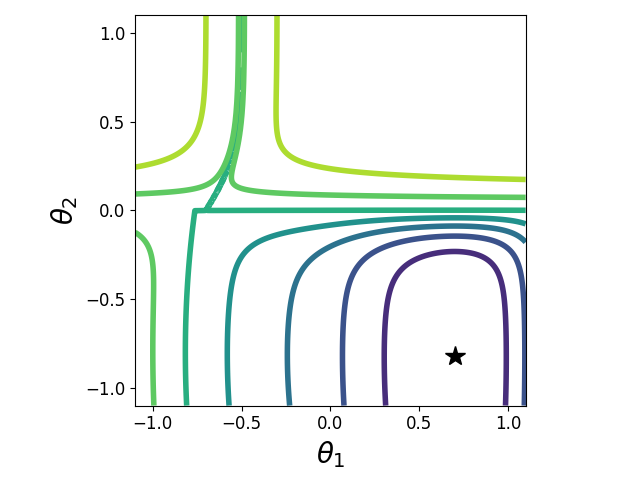} \label{appendix:figure2-b}}
\subfigure[Task2 Objective]{\includegraphics[width=0.323\textwidth]{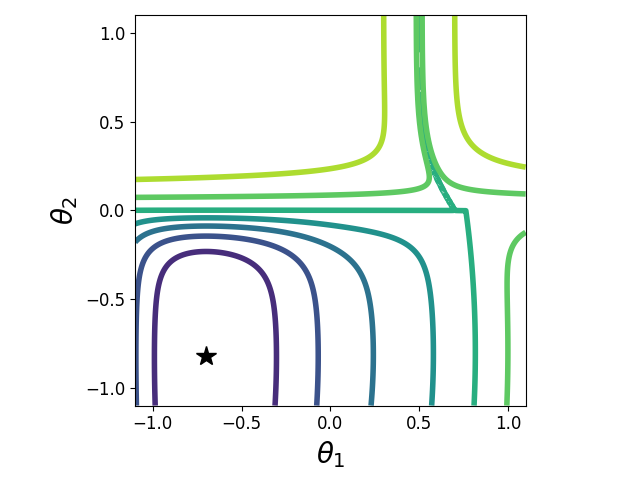} \label{appendix:figure2-c}}
\end{center}
\caption{Two-dimensional contour graphs of the three-dimensional surfaces in Figure \ref{appendix:figure1}.
$\bigstar$ denotes the point with lowest single-task loss.
}
\label{appendix:figure2}
\end{figure*}

\begin{figure*}[t]
\begin{center}
\includegraphics[width=0.85\textwidth]{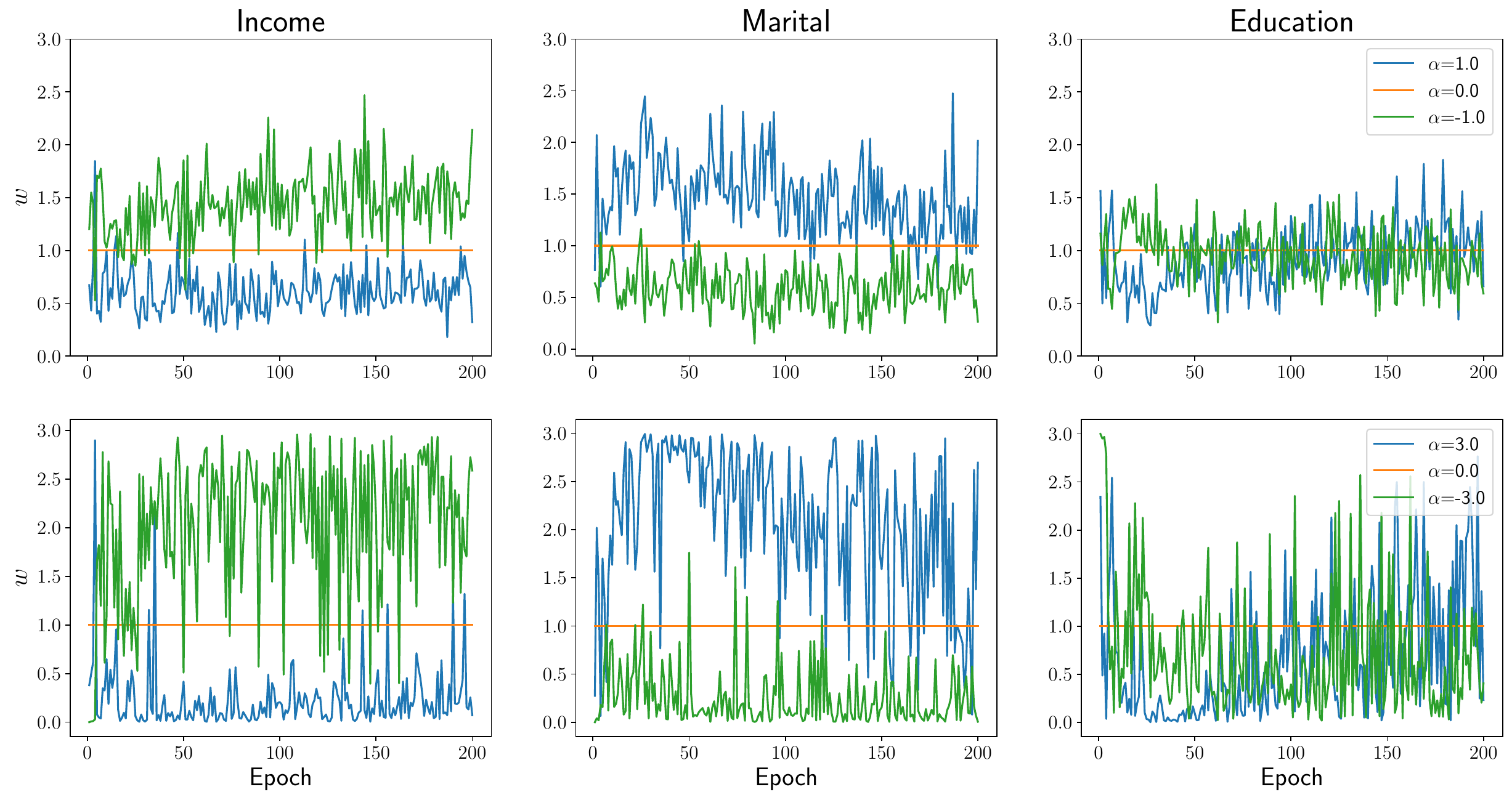}
\end{center}
\caption{Traces of how $w_i$ change during training for different values of $\alpha$ on UCI Census-income dataset.
On the top row, we show the comparison between $\alpha \in \{1,0,-1\}$ and on the bottom row, we show the comparison between $\alpha \in \{3,0,-3\}$.
For $\alpha=0$, $w_i$ remains at 1 for all three tasks.
The positive $\alpha=1.0$ assigns the dominated task Income a greater $w_i>1$, and dominating task Marital a lower $w_i<1$.
And a larger value of $\alpha$ pushes weights farther apart.
While, a negative $\alpha$ has the opposite effect.
}
\label{appendix:figure3}
\end{figure*}

\begin{figure*}[t]
\begin{center}
\includegraphics[width=0.85\textwidth]{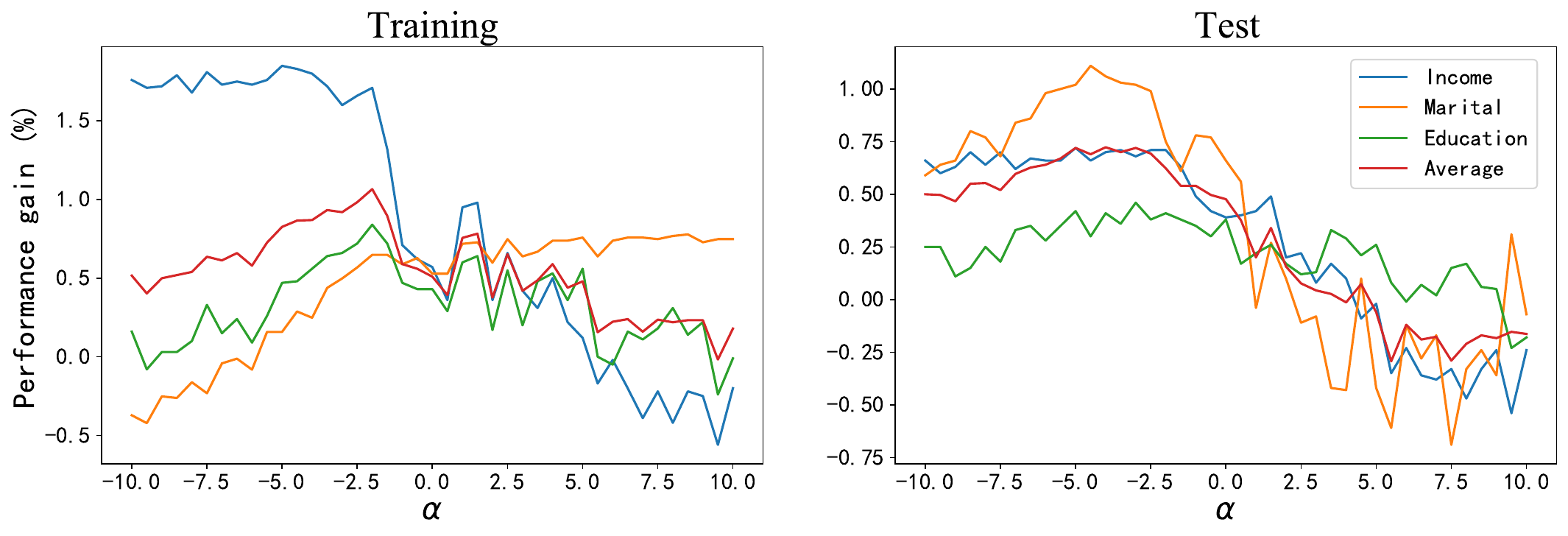}
\end{center}
\caption{Performance gains with different $\alpha$ on UCI Census-income dataset.
We show both the training and test performance gains compared to Uniform scaling baseline across all three tasks and the Average performance metrics.
We enumerate $\alpha \in [-10, 10]$ in steps of 0.5.
}
\label{appendix:figure4}
\end{figure*}


We provide here the details for the illustrative example of Figure \ref{figure2}.
We modify the illustrative example in \citep{CAGrad} and consider $\theta = (\theta_1 + \theta_2) \in \mathbb{R}^2$ with the following individual loss functions:
\begin{align*}
\mathcal{L}_1(\theta)&=c_1f_1(\theta)+c_2f_1(\theta) \text{ and } \mathcal{L}_2(\theta)=c_1f_2(\theta)+c_2f_2(\theta), \text{ where } \\
f_1(\theta)&={\rm log}({\rm max}(|5(-\theta_1-0.7)-{\rm tanh}(-3*\theta_2)|, 0.0005))+1 \\
f_2(\theta)&={\rm log}({\rm max}(|5(-\theta_1+0.7)-{\rm tanh}(-3*\theta_2)|, 0.0005))+1 \\
g_1(\theta)&=(1.5*{\rm tanh}(2*(-\theta_1+0.7)^2)*(\theta_1^2+1)+(-\theta_2-0.8)^2)-2.5 \\
g_2(\theta)&=(1.5*{\rm tanh}(2*(-\theta_1-0.7)^2)*(\theta_1^2+1)+(-\theta_2-0.8)^2)-2.5 \\
c_1(\theta)&={\rm max}({\rm tanh}(5*\theta_2),0) \text{ and } c_2(\theta)={\rm max}({\rm tanh}(-5*\theta_2),0). \\
\label{appendix:eq1}
\end{align*}
The multi-task objective is $\mathcal{L}(\theta)=\mathcal{L}_1(\theta)+\mathcal{L}_2(\theta)$.
The three-dimensional loss surfaces and the corresponding two-dimensional contour graphs are shown in Figure \ref{appendix:figure1} and \ref{appendix:figure2}, respectively.
We pick 3 initial parameter vectors $\theta \in \{(-0.85, 0.75), (-0.85, -0.3), (0.9, 0.9)\}$ and performed 20,000 gradient updates to minimize $\mathcal{L}$ using the Adam optimizer with learning rate 0.001.
The corresponding optimization trajectories with different methods is shown in Figure \ref{figure2}.

\section{Additional Experiments} \label{add_exp}

\subsection[]{Effects of Tuning $\alpha$.}  \label{add_exp1}

To better understand how $\alpha$ affects the $\{w_i\}$, we show the traces of $w_i$ during training for different values of $\alpha$ on UCI Census-income dataset in Figure \ref{appendix:figure3}.
The positive and negative $\alpha$ have opposite effects on $w_i$ of tasks Income and Marital compared to the zero $\alpha$.
And the $\alpha$ with larger absolute values further widen the gap.

\subsection[]{Training and Test Performance Gains with Different $\alpha$.} \label{add_exp2}
To test whether the performance of GradOPS are robust against the hyperparameter $\alpha$ changes, we show both the training and test performance gains with different $\alpha$ on UCI Census-income dataset in Figure \ref{appendix:figure4}.
We note that the training performance gain of dominating task Marital rises as the value of $\alpha$ gets larger, while the gain of the dominated task Income goes down.
This again proves the ability of GradOPS to obtain solutions with different trade-offs.
It should be mentioned that higher training performance does not necessarily guarantee higher test performance.
And the test performance of task Marital with $\alpha > 0$ is worse than the performance with $\alpha \leq 0$, which is inconsistent with the observation of corresponding training performance.
The reason may be that the model overfits on task Marital, which suggesting a better regularization scheme for this domain.
Note that we achieve Average performance gains on both training and test for almost all values of $-10 \leq \alpha \leq 5$, which indicating that GradOPS is numerically stable.
Moreover, the consistently positive performance gains across all these values of $\alpha$ suggest that the way to deconflict gradients introduced by GradOPS can improve MTL performance. 

\end{document}